\def\BibTeX{{\rm B\kern-.05em{\sc i\kern-.025em b}\kern-.08em
    T\kern-.1667em\lower.7ex\hbox{E}\kern-.125emX}}
\newtheorem{theorem}{Theorem}
\newtheorem{lemma}[theorem]{Lemma}
\newtheorem{definition}{Definition}
\newtheorem{assumption}{Assumption}
\newcommand{\bN}{\mathbb{N}}
\newcommand{\bR}{\mathbb{R}}
\newcommand{\cA}{\mathcal{A}}
\newcommand{\cM}{\mathcal{M}}
\newcommand{\cR}{\mathcal{R}}
\newcommand{\cS}{\mathcal{S}}
\newcommand{\cT}{\mathcal{T}}
\newcommand{\fpi}{\boldsymbol\pi}
\newcommand{\jhedit}[1]{{\color{black}#1}}
\newcommand{\jhdelete}[1]{}
\newcommand{\jhhdelete}[1]{{\color{lightgray}#1}}
\renewcommand{\jhedit}[1]{#1}
\renewcommand{\jhdelete}[1]{}
\renewcommand{\jhhdelete}[1]{}
\begin{document}
\title{
    {Temporal Transfer Learning} for {Traffic Optimization} with {Coarse-Grained} {Advisory Autonomy}
}
\author{Jung-Hoon Cho, 
        Sirui Li,
        Jeongyun Kim, 
        Cathy Wu
        \thanks{Received 24 February 2025; revised 28 August 2025; accepted 7 October 2025. Date of publication 24 November 2025; date of current version 8 December 2025. This work was supported in part by the National Science Foundation (NSF) CAREER under Award \#2239566, in part by the Kwanjeong Educational Foundation Ph.D. Scholarship Program, in part by the MIT Energy Initiative (MITEI) Mobility Systems Center, the Kwanjeong scholarship, in part by the NSF under Award 2149548, and in part by the MIT Amazon Science Hub. This article was recommended for publication by Associate Editor J. Le Ny and Editor J. Bohg upon evaluation of the reviewers’ comments. (Corresponding author: Jung-Hoon Cho.)}
        \thanks{Jung-Hoon Cho is with the Department of Civil and Environmental Engineering and the Laboratory for Information \& Decision Systems, Massachusetts Institute of Technology, Cambridge, MA 02139, USA. (e-mail: jhooncho@mit.edu)}
        \thanks{Sirui Li is with the Institute for Data, Systems, and Society and the Laboratory for Information \& Decision Systems, Massachusetts Institute of Technology, Cambridge, MA 02139, USA. (e-mail: siruil@mit.edu)}
        \thanks{Jeongyun Kim is with the Department of Mechanical and Automotive Engineering, Seoul National University of Science and Technology, Seoul, South Korea (e-mail: jkim@seoultech.ac.kr)}
        \thanks{Cathy Wu is with the Laboratory for Information \& Decision Systems; the Institute for Data, Systems, and Society; and the Department of Civil and Environmental Engineering, Massachusetts Institute of Technology, Cambridge, MA 02139, USA. (e-mail: cathywu@mit.edu)}
    }

\markboth{}%
{Cho \MakeLowercase{\textit{et al.}}: Temporal Transfer Learning for Traffic Optimization with Coarse-Grained Advisory Autonomy}


\maketitle

\begin{abstract}
    The recent development of connected and automated vehicle (CAV) technologies has spurred investigations to optimize dense urban traffic\jhedit{, maximizing vehicle speed and throughput}. This paper explores \textit{advisory autonomy}, in which real-time driving advisories are issued to \jhedit{human} drivers, thus \jhedit{achieving near-term performance of automated vehicles}. Due to the complexity of traffic systems, recent studies of coordinating CAVs \jhedit{have leveraged} deep reinforcement learning (RL). Coarse-grained advisory is formalized as zero-order holds, and we consider a range of hold durations from 0.1 to 40 seconds. However, despite the similarity of the higher-frequency tasks for CAVs, a direct application of deep RL fails to generalize to advisory autonomy tasks. 
    \jhedit{To overcome this, we employ zero-shot transfer, training policies on a set of source tasks---specific traffic scenarios with designated hold durations---and then evaluating the efficacy of these policies on different target tasks.}
    We introduce \textit{Temporal Transfer Learning} (TTL) algorithms to select source tasks \jhedit{for zero-shot transfer}, systematically leveraging the temporal structure to solve the full range of tasks. TTL selects the most suitable source tasks to maximize the performance of the range of tasks. We validate our algorithms on diverse mixed-traffic scenarios, demonstrating that TTL more reliably solves the tasks than baselines. This paper underscores the potential of coarse-grained advisory autonomy with TTL in traffic flow optimization.
\end{abstract}

\begin{IEEEkeywords}
Intelligent Transportation Systems, Learning and Adaptive Systems, Deep Learning in Robotics and Automation, Transfer Learning.
\end{IEEEkeywords}

\section{Introduction}

\IEEEPARstart{R}{ecent} advancements in connected and automated vehicle (CAV) technologies have opened up new frontiers in addressing the challenges of urban traffic congestion and associated environmental problems. 
The growing urgency to mitigate traffic-related issues, buoyed by advances in autonomous vehicles (AVs) and machine learning, is pushing the boundaries of urban roadway autonomy. As the transportation sector progressively moves towards a fully autonomous paradigm, the spotlight is firmly on devising innovative methods for traffic flow optimization, targeting key outcomes such as enhanced eco-driving, throughput maximization, and congestion reduction \cite{wu_emergent_2017, stern_dissipation_2018}.

\begin{figure}[!t]
\centering
\includegraphics[width=3.4in]{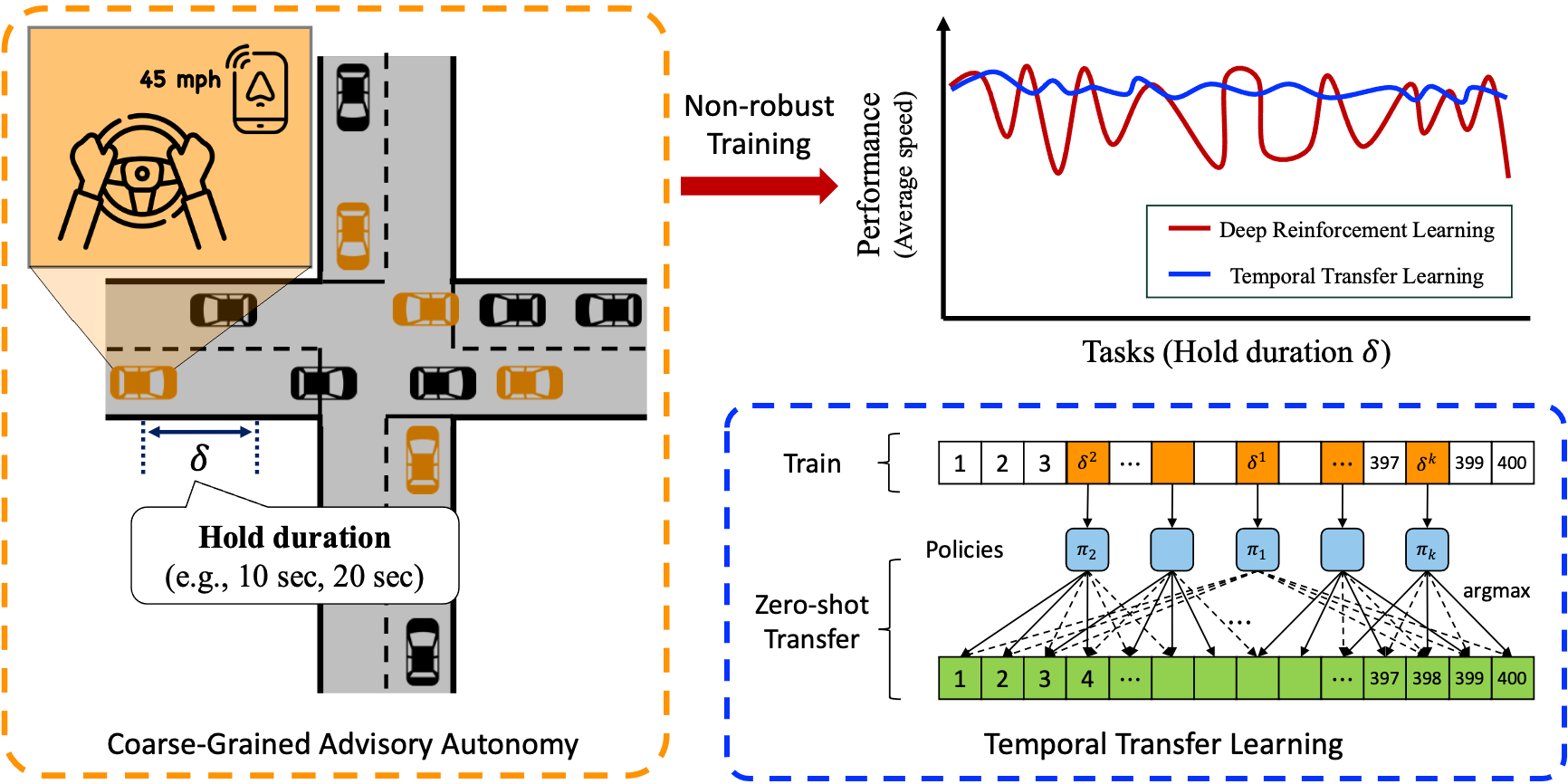}
\caption{\textbf{Illustrative figure of Temporal Transfer Learning (TTL) for the coarse-grained advisory system.} In a \textit{coarse-grained advisory system}, vehicles receive persistent guidance for a specified hold duration rather than instantaneous controls. The system performance of this system shows the non-robustness to the hold duration of deep reinforcement learning when trained exhaustively. We propose \textit{Temporal Transfer Learning (TTL)} methods designed to select source training tasks based on temporal features. In comparison to the exhaustive and multi-task training methods, TTL provides an intermediate number of policies to train to solve a full set of tasks.}
\label{fig:concept}
\end{figure}

This paper highlights the significant role of \textit{advisory autonomy}, \jhedit{an approach where automated systems provide real-time driving guidance to human drivers} to integrate seamlessly with other traffic and achieve better traffic flow. 
\jhedit{The crux of our research lies in demonstrating how advisory autonomy can enable human-driven vehicles to emulate the system-level performance of automated vehicles (AVs), providing a viable, cost-effective alternative in the near term.}
The notion of \textit{coarse-grained advisory autonomy} is formalized through the lens of coarse-grained zero-order holds \cite{sridhar_piecewise_2021}.
With this coarse-grained advice, instead of instantaneous controls (\cite{yan_reinforcement_2021, wu_flow_2022, yan_unified_2022}), vehicles are provided with guidance that persists for a particular duration, thereby addressing the intricacies of fluctuating hold duration.
This is significant as human drivers, unlike AVs, may find it challenging to adhere to frequent and rapid control changes.
\jhedit{Concurrently, work on robustness to human compliance errors (response delay, speed deviation) shows that advisory performance can degrade sharply without explicit handling \cite{kim2025reinforcement}.}

\jhedit{In this work,} our objective is to develop an algorithm that, given a traffic scenario, can determine whether guidance that human drivers could conceivably follow would achieve outcomes comparable to those of AVs.
We concentrate on human compatibility for traffic optimization and the ability of human drivers to match corresponding system-level metrics (such as the average speed of all vehicles and the throughput) rather than achieve accurate maneuvers where AVs possess clear advantages, such as being able to react to abrupt braking without hesitation.
This article subsumes and extends Sridhar \textit{et al.} \cite{sridhar_piecewise_2021}, which originally formulated the problem as piecewise-constant control for traffic optimization. \jhedit{This paper further elaborates to include both acceleration and speed guidance and validates on different traffic networks.}

Integrating this approach with reinforcement learning (RL) presents an elegant way forward, given RL's structured framework for sequential decision-making. 
\jhedit{While deep RL has emerged as a potent tool for this purpose, its direct application to the advisory system has exposed a degree of \jhedit{instability}, characterized by jagged performance over a range of tasks, echoing the findings of Sridhar \textit{et al.} \cite{sridhar_piecewise_2021}. This inconsistency and brittleness necessitate a more sophisticated approach to deep RL, one that can more reliably handle the complexities of real-world traffic scenarios encountered by advisory systems.}

To confront these challenges head-on, we turned to transfer learning, a widely employed technique in numerous research fields that enables the utilization of knowledge acquired from one task to enhance performance in another related task \cite{taylor_transfer_2009, pan_survey_2010}.
Specifically, transfer learning can be applied to adapt the pre-trained policy to a new task or to initialize a learning algorithm with pre-existing knowledge, substantially expediting the learning process and boosting overall performance.
Transfer learning has been successfully applied to improve the efficiency and training performance of traffic management systems \cite{kreidieh_dissipating_2018, jang_simulation_2019, yan_reinforcement_2021}.
\jhedit{In this context, we employ zero-shot transfer, where policies are trained on a source task--specific traffic scenarios with designated hold duration--and then evaluated on different target tasks. This approach is computationally efficient as it obviates the need for any additional fine-tuning, directly leveraging the trained policies to new scenarios.}

We introduce two \textit{Temporal Transfer Learning} (TTL) algorithms-- Greedy Temporal Transfer Learning (GTTL) and Coarse-to-fine Temporal Transfer Learning (CTTL). These algorithms adeptly leverage the temporal similarities across tasks to judiciously select training tasks, thereby significantly facilitating the training efficiency and overall performance. The essence of TTL lies in its capability to seamlessly transfer knowledge acquired from one task to another, circumventing the often observed training brittleness in deep RL algorithms.
\jhedit{The TTL approach provides a structured advantage by systematically leveraging temporal structures inherent in the task domain. Our GTTL methods especially leverage a linear generalization gap\jhedit{, which allows better estimation of zero-shot generalization across different tasks.}}
This ability of TTL to draw insights from prior models offers a promising avenue to circumvent the fragility often observed in deep RL training.
Then, to evaluate our algorithm's generalizability, we consider validation on various traffic scenarios in which mixed-autonomy traffic has been proven effective for traffic optimization \cite{wu_flow_2022, yan_unified_2022}.

The core contributions of this paper are twofold:
\begin{itemize}
    \item We delve into a \textit{coarse-grained advisory}, presenting a compelling case for its viability in enhancing system-level traffic outcomes. Our empirical evidence underscores the possibility of furnishing human drivers with guidance that mirrors AV behavior, leading to tangible traffic improvements. Such findings pave the way for considering human drivers as immediate, practical alternatives to full-fledged AV deployments.
    \item Our research introduces \textit{Temporal Transfer Learning (TTL)} algorithms, a robust methodology specifically designed to tackle the training brittleness intrinsic to deep RL algorithms. TTL can be promising in evolving generalizable training paradigms for complex traffic optimization tasks by adeptly identifying sources of variation and harnessing insights from pre-existing models.
\end{itemize}

\section{Related Work}
\subsection{\jhedit{Reinforcement Learning for Mixed Autonomy Traffic}}
As we await the era of fully automated vehicles, we can anticipate a mixed autonomy system where automated and human-driven vehicles share the road. 
In such a system, controlling only a small proportion of the vehicles can significantly improve the overall traffic flow \cite{wu_flow_2022, kreidieh_dissipating_2018}. 
Several studies have explored the potential of RL in addressing the challenges posed by the coexistence of AVs and human-driven vehicles. 
Researchers have worked on enhancing traffic efficiency in mixed autonomy settings using deep RL-based approaches, showing that it can eliminate stop-and-go traffic and mitigate congestion \cite{wu_flow_2022, kreidieh_dissipating_2018, stern_dissipation_2018, vinitsky_lagrangian_2018, yan_reinforcement_2021, yan_unified_2022}. 
These studies collectively highlight the potential of RL in optimizing mixed autonomy traffic, paving the way for enhanced safety, efficiency, and performance in transportation systems.
\vspace{-0.2em}
\subsection{Advisory Autonomy}
Advisory systems in roadway autonomy span a broad range of applications, from enhancing safety to mitigating traffic congestion. 
These systems provide considerable benefits to users. 
For instance, collision warning alerts have been employed to ensure the driver’s safety \cite{bishop_intelligent_2000}, and speed advisory systems at signalized intersections help users pass the green light efficiently \cite{katsaros_performance_2011}. 
At a system level, on the other hand, the advisory system provides system-level traffic optimization. 
For example, speed advisory systems contribute significantly towards eco-driving \cite{xiang_closed-loop_2015} and personalized advisory systems have been introduced to mitigate traffic congestion \cite{hasan_perp_2023, hasan2024cooperative}. 
Furthermore, roadway signs suggesting advisory speeds represent another form of advisory autonomy.

However, the application of advisory systems poses unique challenges given their interaction with human drivers. While fully automated vehicles can operate within clearly defined parameters and constraints, human drivers exhibit different behaviors. For instance, as noted by Mok \textit{et al.}, humans require a minimum of 5-8 seconds to appropriately transition control \cite{mok_emergency_2015}. \jhedit{This finding highlights the importance of considering the distinct characteristics and limitations of human drivers when developing control methods.} For instance, Sridhar \textit{et al.} identified two key characteristics of human-compatible driving policies: a simple action space and the capacity to maintain the same action for a few seconds \cite{sridhar_piecewise_2021}.
An example of a human-compatible advisory system is a coarse-grained control system, which is provably stable in the context of Lyapunov in mitigating congestion on single-lane ring roads \cite{li2023integrated}. This system, known as an action-persistent Markov Decision Process (MDP), successfully addresses the human need for simplicity and persistent actions.
In light of these considerations, it is crucial to integrate human driving characteristics into the design of control methods for human drivers. 
\jhedit{Complementary to our effort in developing the transfer learning based algorithm, Kim et al.\ \cite{kim2025reinforcement} relax the perfect-compliance assumption by modeling driver delays and speed deviations, revealing substantial degradation without robustness measures and proposing RL-based advisories resilient to such errors.}

\subsection{\jhedit{Action Persistent MDPs and RL} \label{sec:action-repetition}}
\jhedit{In exploring action repetition within reinforcement learning, the concept of Semi-Markov Decision Processes (Semi-MDPs) offers a rich framework for incorporating temporally abstract actions into the traditional MDP paradigm \cite{sutton_between_1999}. The ability to apply the same action across extended time periods allows for a simplified control strategy that can be beneficial for complex control problems.}

Various methodologies such as reducing control granularity, implementing a skip policy, and applying temporal abstraction have been employed to analyze action repetition \cite{lakshminarayanan_dynamic_2017, sharma_learning_2020, biedenkapp_temporl_2021, metelli_control_2020}.
Metelli \textit{et al.} introduced action persistence and the Persistent Fitted Q-Iteration (PFQI) algorithm to modify control frequencies and learn optimal value functions \cite{metelli_control_2020}. 
Lee \textit{et al.} addressed the multiple control frequency problems that guarantee convergence to an optimal solution and outperform baselines \cite{lee_reinforcement_2020}. 
\jhedit{In the context of transportation systems, Sridhar and Wu investigate the use of piecewise constant policies for traffic congestion mitigation, providing a structured approach to guide human drivers in real-time \cite{sridhar_piecewise_2021}.}

\jhedit{These contributions collectively underscore the significance of action repetition and the strategic choice of control frequencies in RL. They also highlight the potential for translating these concepts into tangible traffic management solutions, exemplifying the intersection between theoretical research and practical application.}

\subsection{\jhedit{Transfer Learning in RL}}
Transfer learning is a popular technique used in various research domains to leverage the knowledge gained from one task to improve performance in another related task \cite{taylor_transfer_2009, pan_survey_2010}. 
In particular, transfer learning can be used to adapt a pre-trained policy to a new task or to initialize a learning algorithm with pre-existing knowledge\jhedit{, thereby greatly accelerating the learning process and improving overall performance.}
In contrast to multitask learning's simultaneous approach, transfer learning applies knowledge from source tasks to optimize a particular target task, underscoring an asymmetrical relationship between tasks \cite{pan_survey_2010}.

Transfer learning offers the advantage of significantly decreasing the amount of data needed for learning compared to traditional independent learning methods \cite{yang_theory_2013}. 
Dynamic transfer learning maps for multi-robot systems can be obtained \jhedit{from} the basic system properties \jhedit{of} approximated physical models or experiments \cite{helwa_multi-robot_2017}. 
Kouw and Loog not only delved into the specific instances and various techniques of domain adaptation but also highlighted the challenges of sequential domain adaptation \cite{kouw_introduction_2019}. 
Moreover, transfer learning also has its benefits, \jhedit{as it requires a reduced number of data or training for new tasks, stemming from the shared representation of related tasks} \cite{yang_theory_2013, tripuraneni_theory_2020, guan_task_2022}.

In robotics, transfer learning has been utilized for a wide range of applications such as robot manipulation, locomotion, and control \cite{higgins_darla_2018, rusu_sim--real_2018}. 
In the context of traffic settings, transfer learning has been applied to improve the efficiency and training performance of traffic management systems \cite{kreidieh_dissipating_2018, jang_simulation_2019, yan_reinforcement_2021}. 
For example, Kreidieh \textit{et al.} proposed a transfer learning framework that can help the warm start for training policies to dissipate shockwaves from closed traffic scenarios to more complex open ones \cite{kreidieh_dissipating_2018}. 
Similarly, zero-shot policy transfer to adapt a pre-trained policy for autonomous driving in a structured environment to an unstructured environment results in improved performance and safety \cite{jang_simulation_2019}. 

Also, the transferability of the learned policies may differ at different levels of tasks; for instance, policies derived from more structured and informative tasks are more robust to diverse tasks \cite{kreidieh_inter-level_2021}. 
Yan \textit{et al.} proposed a unified framework for traffic signal control using transfer learning to transfer knowledge across different intersections and adapt to varying traffic conditions \cite{yan_reinforcement_2021}. 
Also, transfer learning is used for real-time crash prediction \cite{man_transfer_2022}, and traffic flow prediction in data-sparse regions \cite{oruche_transfer_2021}.

RL-based methods require generating significant amounts of simulation data, which can be costly. 
However, transfer learning offers a solution to alleviate the burden of data generation and simulation for training each model. 
By employing an efficient training scheme, the model can quickly learn when, what, and where to transfer knowledge in scenarios with limited data availability \cite{jang_learning_2019}.
\jhedit{The selection of source tasks is critical in transfer learning as it sets the foundation for the efficacy of knowledge transfer. Contextual relevance in source task selection is critical for the efficacy of the transferability of a policy, which may be predicted through its relation to the target task's characteristics \cite{sinapov_learning_2015, li_context-aware_2019}. Furthermore, Agostinelli \textit{et al.} explore metrics that predict the success of transferred knowledge, facilitating the selection of source model ensembles to maximize performance on the target task \cite{agostinelli_transferability_2022}.
In addition to selecting individual source tasks, multi-task learning can also be used to solve multiple related tasks \cite{belletti_expert_2018}.
}
\jhedit{Closest to our setting, Cho et al.\ introduce Model-Based Transfer Learning (MBTL) \cite{cho2024model}, which explicitly models (i) the training performance via Gaussian processes and (ii) a linear generalization gap over context, and then uses Bayesian optimization to pick source tasks with sublinear regret guarantees. In contrast, our TTL specializes in the temporal context (hold duration~$\delta$) and yields simpler closed-form greedy and coarse-to-fine selection rules with area-coverage guarantees.}

A hierarchical approach to task granularity can be beneficial as it allows for the refinement of coarse attributes while learning finer tasks. 
This method has been successfully employed by Wei \textit{et al.} in their work on vehicle re-identification tasks \cite{wei_coarse--fine_2018} and in large-scale fault diagnosis tasks \cite{wang_coarse--fine_2023}. 
\jhedit{Our method leverages temporal locality in hold duration---transferring between nearby $\delta$---which plays an analogous role to curriculum learning while explicitly optimizing source-task selection under a fixed training budget.}

Overall, transfer learning has shown promising results in improving the efficiency and safety of traffic management systems by leveraging the similar temporal structure of a series of tasks and prior knowledge from related tasks.

\section{\jhedit{Coarse-Grained Control}}
\subsection{\jhedit{Coarse-Grained Guidance in Advisory Autonomy}}
Advisory autonomy stands for the automated system that provides guidance to human drivers rather than a fully controllable process.
In this context, it is designed to work in the presence of human-driven vehicles, ensuring that controlled vehicles operate in a manner that is safe, predictable, and intuitive for human drivers. 
\textit{Coarse-grained control} refers to the vehicle control system that gives control periodically. 
Coarse-grained control involves applying the same action to an autonomous vehicle for a fixed time segment.
As we discussed in \Cref{sec:action-repetition}, coarse-grained control can be interpreted as action-persistent MDPs with different control granularities.

\subsection{\jhedit{Action Persistent MDPs}}

\jhedit{\jhedit{We consider $N$ vehicles} and assume that all vehicles are human-driven vehicles. A subset of these vehicles, defined by the fraction $\rho$, receives periodic guidance from the advisory system and is termed \textit{guided vehicles}. The remaining vehicles, constituting the fraction $(1-\rho)$, are designated as \textit{default-driven vehicles} and do not receive such guidance.}
Guided vehicles are human-driven vehicles with periodic assistance from a trained policy for coarse-grained control, $\pi(s_{t_m})$.
The policy is applied at intervals $t_m=\delta m$, where $m \in \bN_0$ ($\bN_0$ as a set of non-negative integers) and $\delta$ denotes the guidance hold duration. \jhedit{It's essential that the guidance hold duration is significantly shorter than the total horizon, denoted as ($H\gg\delta$).}
These vehicles receive guidance for any time $t$ that falls within the range $[t_m, t_{m+1}]$. 
This action persistent MDP can be represented by the 7-tuple $\cM_\delta=(\cS,\cA,\cT,\cR,H,\gamma,\delta)$\jhedit{, where $\cS$ defines the state space, $\cA$ is action space, $\cT:\cS\times\cA\times\cS\rightarrow\bR$ is a transition probability distribution, $\cR$ is reward function, $H$ is a total time horizon, and $\gamma$ is a discount factor. 
The transition probability function $P(s'|s,a)$ specifies the probability of transitioning to a state $s'$ from a state $s$ by taking action $a$. 
An agent's objective in an MDP is to find a policy $\pi$ that maximizes the expected sum of rewards obtained over time, given the current state $s$ and the actions it can take.} 

\jhedit{\textit{Coarse-grained control}, also known as piecewise constant control or zero-order hold control, refers to the application of the same control action over a specified time segment length \cite{sridhar_piecewise_2021}.}
In other words, the same action \jhedit{determined at time step $t_m$} is applied to the time segment of $t\in[t_m,t_{m+1}]$.
In the single-lane ring, the simulation experiments reported that the hold duration could be extended to 24 seconds without degradation of the system performance \cite{sridhar_piecewise_2021}.
This piecewise constant control is backed up with the simulator experiments to evaluate the effect of the coarse-grained advisory \cite{hasan_towards_2023}.
Hasan \textit{et al.} also introduces a cooperative advisory system that leverages a novel driver trait conditioned Personalized Residual Policy (PeRP) to guide drivers in ways that reduce traffic congestion \cite{hasan_perp_2023, hasan2024cooperative}.

\subsection{\jhedit{Guidance Type}}
\jhedit{In our setting, we assume that the guided vehicle $i$ observes the space headway $s_i(t)$ between $i$th vehicle and the preceding vehicle and its derivative $\dot{s}_i(t)$ can be computed from successive headway measurements or velocity differences when the preceding vehicle’s velocity can be observed. Likewise, the velocity $v_i(t)$ comes from speed sensors in the vehicle.}
\jhedit{Based on the observation of the drivers, \jhedit{the coarse-grained controller} provide\jhedit{s} the drivers with either the target raw acceleration or target speed.}

\textbf{\jhedit{Acceleration Guidance. }}
\jhedit{Acceleration guidance in advisory autonomy systems directs human drivers by recommending the optimal acceleration action from a continuous action set, determined by the trained policy $\pi$. Concretely, the policy $\pi$ takes the observed traffic state at time $t_m$---namely $\bigl(s_i(t_m), \dot{s}_i(t_m), v_i(t_m)\bigr)$---and outputs a recommended acceleration for guided vehicle $i$ for the duration $t \in [t_m,\,t_{m+1} = t_m + \delta]$.}

For this acceleration guidance in the single-lane ring, Lyapunov analysis gives sufficient conditions for the stability of the coarse-grained advisory \cite{li2023integrated}.
\jhedit{Moreover, the success of this guidance system critically depends on the interface design through which advisories are communicated. A few challenges are possible discomfort for drivers, the complexity of human drivers in accurately interpreting and executing precise acceleration commands, and an increased risk of manual execution errors.}

\textbf{\jhedit{Speed Guidance. }}
\jhedit{In contrast to acceleration guidance, speed guidance presents human drivers with a target speed that they should attain and maintain over the hold duration interval. This approach is motivated by the observation that many drivers find it more intuitive to adjust to a specific speed target rather than precisely following recommended accelerations \cite{macadam_understanding_2003}.}
Moreover, using acceleration guidance type for the coarse-grained control often struggles to achieve and sustain the optimal velocity as discussed in \cite{bando_dynamical_1995, sugiyama_optimal_1999}.
\jhedit{Accordingly, our trained policy $\pi$ generates a recommended target speed for guided vehicle $i$ based on its current state $(s_i(t_m), \dot{s}_i(t_m), v_i(t_m))$. Once this target speed is communicated, the driver accelerates or decelerates to reach it as quickly and smoothly as possible, subject to comfort and safety constraints.}

Researchers have worked on the speed advisory system \cite{liang_joint_2019, liu_simulation_2019, ma_vehicle_2021}. 
For example, Liang \textit{et al.} guided the driver with the speed for signal phase and timing in CAV environment \cite{liang_joint_2019}.
Wang \textit{et al.} reported that the human-machine interface displaying the difference between current and suggested speeds with the cooperative driving simulator improved the performance while displaying time difference harmed the speed adaptation \cite{wang_investigating_2023}.
However, there are some drawbacks that human drivers tend to perceive the target speed as the easily broken speed limit and easily exceed it \cite{mannering_empirical_2009}.

\Cref{fig:two-guide-types} intuitively depicts two distinct forms of \jhedit{advisory} provided to drivers: acceleration and speed guidance.
From a stabilization standpoint, speed guidance has certain advantages, as it enables the vehicle to maintain a constant speed throughout the hold duration. 
However, under acceleration guidance, the vehicle's speed is subject to change unless the acceleration is precisely zero. 
This inherent difference between the two forms of guidance gives rise to distinct behaviors and responses within the traffic system, as demonstrated in our results.

\begin{figure}[!t]
    \centering
    \subfloat[Acceleration guidance\label{fig:guide-accel}]{
        \includegraphics[width=1.67in]{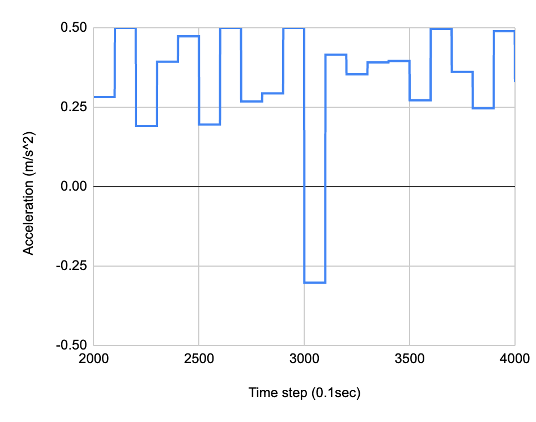}}
    \hfill
    \subfloat[Speed guidance\label{fig:guide-speed}]{
        \includegraphics[width=1.67in]{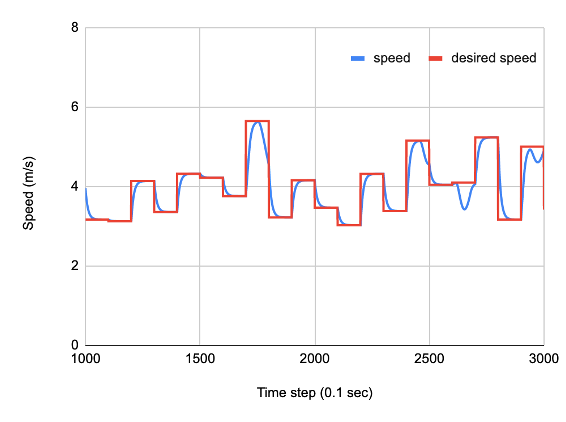}}
    \caption{Two types of advisory system to the human drivers: acceleration guidance (\ref{fig:guide-accel}), speed guidance (\ref{fig:guide-speed}).
    }
    \label{fig:two-guide-types}
\end{figure}

\section{Temporal Transfer Learning}
In advisory autonomy, we guide human drivers by providing a predetermined period, known as the hold duration, indicating how long they should maintain their guided actions. 
We consider solving families of MDP tasks whose only difference is the guidance hold duration, since the control of humans can vary. 
\jhedit{Throughout, we use “task” in the transfer-learning sense to denote the same control problem evaluated at a particular guidance hold duration~$\delta$. Apart from $\delta$, all environment and model parameters, such as the number of agents and road networks, are identical.}
Even in this setting, we find that RL will train successfully in some scenarios and unsuccessfully in others, with no clear pattern among the tasks. 
Similar findings have been documented in \cite{jayawardana_impact_2022}.

The algorithm introduced in this section is inspired by the intuition that an optimal strategy for hold duration $\delta$ should not differ significantly from that for hold duration $\delta' \approx \delta$. 
\jhedit{In particular, small perturbations of the hold duration should yield only minor changes to the action sequence and the closed-loop dynamics. Empirically, however, $J^*(\delta)$ and the learned policies exhibit a jagged, non-smooth dependence on $\delta$---a sign of training brittleness---which motivates zero-shot transfer across nearby tasks.}

\jhedit{We aim to investigate multiple tasks to comprehend the intricacies of the coarse-grained advisory and its effectiveness in optimizing traffic flow, particularly in mitigating congestion. Addressing multiple tasks provides a holistic understanding of the system's behavior under various scenarios, thereby informing more robust optimization strategies.}
While solving multiple tasks simultaneously with a separate model per each task could be computationally intensive and resource-demanding, leveraging a pre-trained model and transfer learning for our specific tasks can drastically reduce the computational burden. 
\jhedit{The list of variables and functions used throughout this paper is summarized in the \Cref{tab:notations}, ensuring clarity and ease of reference for the reader.}
\jhedit{
\begin{table}[!t]
    \caption{Table of Notations}
    \label{tab:notations}
    \centering
    \begin{tabular}{cl}
        \hline
        \textbf{Symbol} & \textbf{Description} \\
        \hline
        $\delta$ & Guidance hold duration\\
        $J(\delta)$ & The performance of task with duration $\delta$\\
        $A$ & The aggregate performance across different durations\\
        $k$ & Sequential step in source task selection\\
        $\delta^k$ & The guidance hold duration chosen at $k$th step\\
        $\pi_k$ & The policy trained at the task at $\delta^k$\\
        $J^{\pi_k}(\delta^k)$ & The performance of policy $\pi_k$ evaluated at $\delta^k$\\
        $\Delta J(\delta_S, \delta_T)$ & The generalization gap when the policy trained with $\delta_S$ \\
        &transferred to the task with $\delta_T$\\
        $J_k(\delta)$ & The performance updated with the best-performing\\
        &performance among previously trained policies\\
        $S_k$ & A set of selected source tasks\\
        \hline
    \end{tabular}
\end{table}
}

\subsection{\jhedit{Problem Definitions}}

\textbf{\jhedit{Guidance Hold Duration and Performance. }}
\jhedit{In coarse-grained advisory settings, let $J(\delta)$ denote the performance metric for a task with a guidance hold duration of $\delta$. For a coarse-grained policy $\pi$, we write this as $J^\pi(\delta)$. The hold duration $\delta$ spans from $\delta_\text{min}$ to $\delta_\text{max}$.}
\jhedit{We define the \emph{aggregate performance} over the interval $[\delta_\text{min}, \delta_\text{max}]$ as the integral of $J(\delta)$:}
\begin{equation}
    \jhedit{A(\delta_\text{min}, \delta_\text{max}) \;=\; \int_{\delta_\text{min}}^{\delta_\text{max}} J(\delta) \,\mathrm{d}\delta,}
\end{equation}
\jhedit{which measures how the system performs across all hold durations. In practice, we approximate this integral via a discrete sum:}
\begin{equation}
    \jhedit{\tilde{A}(\delta_\text{min}, \delta_\text{max}) \;=\; \sum_{\delta=\delta_\text{min}}^{\delta_\text{max}} J(\delta).}
\end{equation}
\jhedit{For notational simplicity, we will use $A(\delta_\text{min}, \delta_\text{max})$ or simply $A$ to represent this aggregate performance.}

\textbf{\jhedit{Sequential source tasks selection problem. }}
\jhedit{We next define the \emph{sequential source tasks selection problem}, where the goal is to iteratively choose which hold duration $\delta^k$ to train on so as to maximize overall performance. We let $K$ denote the transfer budget (i.e., the maximum number of source tasks we can train). At each iteration $k \in \{1,\dots,K\}$, we train a policy on the task with hold duration $\delta^k$. The set of selected source tasks at iteration $k$ is denoted by $S_k$.}

\jhedit{We write $J_k(\delta)$ for the estimated performance on a task with hold duration $\delta$ after $k$ iterations of training on selected tasks. Training at $\delta^k$ produces a policy $\pi_k$ with performance $J^{\pi_k}(\delta^k)$ on that same task. In practice, a policy trained on one task $\delta_S$ may perform suboptimally when applied to a different task $\delta_T$, due to a \emph{generalization gap}. This gap captures how performance degrades when zero-shot transferring a policy from $\delta_S$ to $\delta_T$.} \jhedit{Such \jhedit{performance degradation} has been studied in literature \cite{wang_generalization_2019, benjamins_contextualize_2023}, and similar effects have been observed in multi-objective contexts \cite{garau-luis_multi-objective_2022}. The generalization gap is crucial for understanding the limits of transfer and for guiding the iterative improvement of task-specific policies in reinforcement learning.}

\begin{definition}[Generalization Gap \jhedit{$\Delta J(\delta_S, \delta_T)$}]
    For a policy trained on source task $\delta_S$ and evaluated on target task $\delta_T$, \jhedit{we define the \emph{generalization gap} as the difference in performance $\Delta J(\delta_S,\delta_T)=J^{\pi_S}(\delta_S)-J^{\pi_S}(\delta_T)$.}
\end{definition}

\jhedit{We initialize the estimated performance $J_0(\delta)$ for all guidance hold durations $\delta$ within the range of interest to zero, which sets the baseline for subsequent improvements:}
\begin{equation}
    J_0(\delta)=0 \quad \forall \delta \in [\delta_\text{min}, \delta_\text{max}].
\end{equation}

\jhedit{After each iteration $k$, $J_k(\delta)$ is updated by taking the maximum over the best known policy so far and the newly trained policy (accounting for the generalization gap):}
\begin{equation}
    J_k(\delta) = 
    \begin{cases}
        J^{\pi_k}(\delta^k), & \text{if } \delta = \delta^k, \\
        \max \bigl(J_{k-1}(\delta),\; J^{\pi_k}(\delta^k) - \Delta J(\delta^k, \delta)\bigr), & \text{otherwise}.
    \end{cases}
\end{equation}

\begin{definition}[Sequential Source Tasks Selection Problem]
    \jhedit{The problem is to select a sequence of tasks $\{\delta^1,\delta^2,\dots,\delta^K\}$ that maximizes the aggregate performance. At iteration $k$, we pick $\delta^k$ to maximize $A_k(\delta^k)$, where}
    \begin{equation}
        \jhedit{
        A_k(\delta^k)
        = \int_{\delta_{\text{min}}}^{\delta_{\text{max}}} 
        \max\bigl(J_{k-1}(\delta),\; J^{\pi_k}(\delta^k)-\Delta J(\delta^k, \delta)\bigr)\,\mathrm{d}\delta.
        }
    \end{equation}
    \jhedit{The selection process continues for up to $K$ iterations, thereby incrementally improving policy performance across all hold durations.}
    \label{def:sequential-source-tasks-selection}
\end{definition}

\begin{figure}[!t]
    \centering
    \begin{tikzpicture}[domain=0:5,scale=0.9]
        \fill[black!10] (0,0) -- (0,3.1) -- (0.9,4) -- (4.9,0) -- cycle;
        \draw[very thin,color=gray] (-0.1,-0.1) grid (4.9,4.9);
        
        \draw[->] (-0.2,0) -- (5.2,0) node[below=0.1, yshift=-8pt, pos=0.5] {Target task $\delta$};
        \draw[->] (0,-0.2) -- (0,5.2) node[left] {$J$};

        \draw[color=black] (0,1.5) -- (2.5,4) -- (5,1.5);
        \draw[color=black] (0,3.1) -- (0.9,4) -- (4.9,0);
        \draw[black,thick] (0,3.1) -- (0.9,4) -- (1.7,3.2) -- (2.5,4) -- (5,1.5);
        \draw[black,dotted,thick] (0.9,4) -- (0,4) node[left] {$J^{\pi_1}(\delta^1)$};
        \draw[black,dotted,thick] (2.5,4) -- (5,4) node[right] {$J^{\pi_2}(\delta^2)$};
        \draw[black,dotted,thick] (2.5,0) node[below] {$\delta^2$} -- (2.5,4);
        \draw[black,dotted,thick] (0.9,0) node[below] {$\delta^1$} -- (0.9,4);
        \draw[black,dotted,thick] (3.7,0) node[below] {$\delta$} -- (3.7,2.8);
        \draw[<->,thick] (4.2,1.2) -- (4.2,4) node[pos=0.35, right=0.3] {$\Delta J(\delta^1,\delta)$};
        \draw[<->,thick] (4.5,2.8) -- (4.5,4) node[pos=0.5, right] {$\Delta J(\delta^2,\delta)$};
        
        \draw[black,dotted,thick] (3.7,2.8) -- (5,2.8) node[right] {$J_{2}(\delta)$};
        \draw[black,dotted,thick] (3.7,1.2) -- (5,1.2) node[right] {$J_{1}(\delta)$};
        \node[] at (1.5,1.5) {$A_1$};
    \end{tikzpicture}
    \caption{\jhedit{Visualization of sequential source task selection and corresponding performance evaluations within the guidance hold duration space. The shaded region represents the aggregate performance $A_1$ after selecting $\delta^1$ in the first step. The generalization gap $\Delta J(\delta^1,\delta)$ quantifies the performance drop when applying the policy trained at $\delta^1$ to a target task with $\delta$. At the second step, the selection of $\delta^2$ updates the estimated performance of task with duration of $\delta$ from $J_1(\delta)$ to $J_2(\delta)$.}}
    \label{fig:illust-for-notation}
\end{figure}
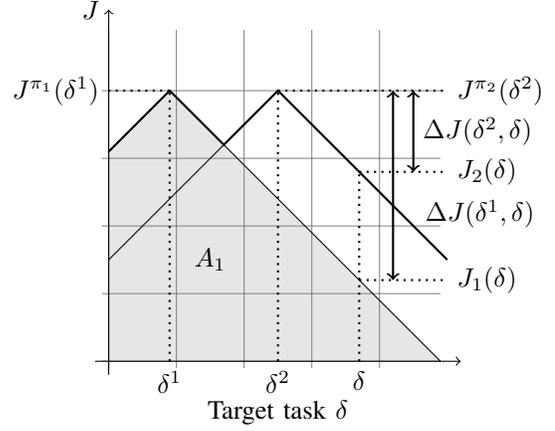

\jhedit{A conceptual illustration of this process is shown in \Cref{fig:illust-for-notation}. At each step, the newly trained policy may enhance performance in a neighborhood of $\delta^k$, subject to the generalization gap when evaluated on other tasks.}

\subsection{\jhedit{Modeling Assumptions} \label{sec:modeling-assumptions}}
\jhedit{\noindent We now list the main assumptions used to formalize and analyze the source tasks selection problem in the context of temporal transfer learning.}

\begin{assumption}[Constant {upper-bound} performance]
    \jhedit{For any hold duration $\delta$ in $[\delta_\text{min}, \delta_\text{max}]$, the upper-bound of the performance $J^*(\delta)$ is constant, denoted by $J^*$. Formally,}
    \begin{equation}
        \jhedit{J^*(\delta) = J^*, \quad \forall\, \delta \in [\delta_\text{min}, \delta_\text{max}].}
    \end{equation}
    \label{assume:constant-upperbound-j}
\end{assumption}
\jhedit{\Cref{assume:constant-upperbound-j} is supported by empirical analysis within coarse-grained advisory autonomy settings, suggesting that various coarse-grained guidance tasks may uphold the same \jhedit{training} performance. Our observations in single-lane ring environments validate this assumption (\Cref{fig:ring-scratch}), although it is noted that in more complex scenarios like highway ramps, the \jhedit{training} performance may decline as hold duration increases (\Cref{fig:ramp-scratch}).}

\begin{assumption}[\jhedit{Deterministic training performance}]
    \jhedit{For any task trained with hold duration $\delta^k$, \jhedit{training attains performance $J^{\pi_k}(\delta^k)\approx J^*(\delta^k)$; analysis uses $J^{\pi_{k}}(\delta^k)=J^*(\delta^k)$}.}
    \label{assume:successful-train}
\end{assumption}

\jhedit{From Assumptions \ref{assume:constant-upperbound-j} and \ref{assume:successful-train}, training on any chosen $\delta^k$ achieves $J^*$, which simplifies the analysis of subsequent transfers.}

\begin{assumption}[Linear generalization gap]
    \jhedit{The generalization gap $\Delta J(\delta_S, \delta_T)$ between tasks $\delta_S$ and $\delta_T$ is linearly proportional to $|\delta_S - \delta_T|$. Specifically,}
    \begin{equation}
        \Delta J(\delta_S, \delta_T) = 
        \begin{cases}
            \theta_L (\delta_S-\delta_T),& \text{if } \delta_S > \delta_T\\
            \theta_R (\delta_T-\delta_S), & \text{otherwise}
        \end{cases}
    \end{equation}
    where $\theta_L$ signifies the slope of transfer performance when transitioning from a coarser to a finer task, implying that $\delta_S > \delta_T$. Conversely, $\theta_R$ represents the slope when shifting from a finer to a coarser task, suggesting that $\delta_S < \delta_T$.
    \label{assume:linear-transfer}
\end{assumption}

\begin{assumption}[Symmetric generalization gap function]
    \jhedit{For simplicity, we assume the transfer slopes are equal, i.e., $\theta_L = \theta_R = \theta$.}
    \label{assume:same-transfer-slope}
\end{assumption}
\Cref{assume:same-transfer-slope} simplifies the analysis by asserting that the granularity of the task does not influence the rate of performance degradation during policy transfer.

\begin{assumption}[Bounded slope of generalization gap function]
    \jhedit{We require that $J^* \ge \theta(\delta_\text{max}-\delta_\text{min})$, so that}
    \begin{equation}
        \theta \leq \frac{J^*}{\delta_\text{max}-\delta_\text{min}}
    \end{equation}
    \label{assume:upperbound-J} 
\end{assumption}
\jhedit{This bound ensures the generalization gap does not exceed the maximum possible performance, making geometric analyses tractable.}
If $J^*$ is larger than $\theta(\delta_\text{max}-\delta_\text{min})$, the transfer from any point would be able to encompass the additional volume. 
Thus, without loss of generality, we can assume $J^*=\theta(\delta_\text{max}-\delta_\text{min})$ in our geometric analysis.
If $J^*$ is less than $\theta(\delta_\text{max}-\delta_\text{min})$, indicating a relatively constrained effective transfer range, the advantage of using transfer learning might be limited. 
This is due to the fact that TTL tends to benefit from the case where the amount of the generalization gap is prominent.

\subsection{\jhedit{Optimal Strategy for Source Tasks Selection Problem} \label{sec:optimal-incremental-transfer}}
With several assumptions we made in the previous section, we can devise a systematic algorithm to solve the source tasks selection problem and choose the subsequent training source task based on simple geometry.
We consider an analysis that simplifies the marginal performance improvement after each iteration to obtain intuition and provide a theoretical grounding for the TTL process.

\jhedit{As shown in Figure~\ref{fig:illust-for-notation} and supported by Assumptions~\ref{assume:constant-upperbound-j}--\ref{assume:linear-transfer}, training on a selected source task yields optimal performance at that task, creating two distinct segments in the performance function. These segments, delineated by the selected task, can be modeled as piecewise linear functions. After $(k-1)$ iterations, there are $k$ segments with inflection points at $\delta^1,\dots,\delta^{k-1}$. Our objective is to select the next hold duration $\delta^k$ that maximizes the aggregate performance $A_k$. To this end, we evaluate each piecewise linear segment and compute its potential marginal increase in $A_k$, which depends on the shape of $J(\delta)$.}

\jhedit{Figure~\ref{fig:illust-ttl} illustrates various decision rules for selecting the transfer point based on the shape of $J(\delta)$. Under Assumption~\ref{assume:constant-upperbound-j}, the performance upper-bound $J^*$ is depicted as a flat blue dotted line, while Assumptions~\ref{assume:linear-transfer} and \ref{assume:same-transfer-slope} ensure that the transfer performance functions in both directions are linear with the same slope.}

\begin{figure}[!t]
    \centering
    \includegraphics[width=2.8in]{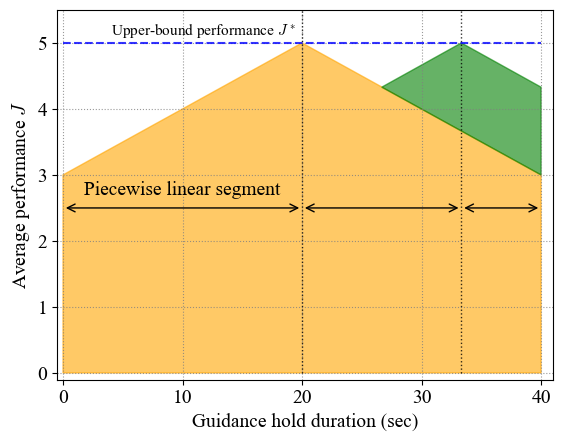}
    \caption{
    \jhedit{An exemplified representation of the Temporal Transfer Learning (TTL) process for source task selection. The graphic showcases the stepwise procedure for two iterations ($k=2$), resulting in two segments demarcated by inflection points at $\delta^1$ and $\delta^2$. The upper-bound performance $J^*$ is indicated by the blue dotted line, as posited in \cref{assume:constant-upperbound-j}, while the piecewise linear segments and their slopes, as governed by \cref{assume:linear-transfer} and \ref{assume:same-transfer-slope}, guide the selection of the next hold duration $\delta^k$ that will maximize the aggregate performance $A_k$. Each segment is assessed for its potential marginal contribution to $A_k$, with decisions influenced by the shape of the performance function $J(\delta)$, here visualized as transitions from the orange to the green area, signifying the shift in guidance hold duration from $\delta^1 = 20$ to $\delta^2 = 33.33$.}}
    \label{fig:illust-ttl}
\end{figure}

\jhedit{We formalize a greedy approach in \Cref{theorem:1-step-greedy-for-single-step}, which chooses $\delta^k$ to maximize the marginal increase in $A_k$ within each piecewise linear segment. Although ``greedy” focuses on one step at a time, it yields an efficient and interpretable selection rule under our linear-gap assumptions.}
\begin{theorem}[\jhedit{Optimal source task selection for greedy transfer}]
    \jhedit{Consider a piecewise linear segment $[\delta_L, \delta_R]$ of $J_k(\delta)$. To maximize the marginal increase in $A_k$, the greedy choice of $\delta^k$ is:}
    \begin{align}
        \delta^k &=\begin{cases}
            \frac{\delta_L+\delta_R}{2}&\text{for } k=1 \text{ or } J_k \text{ symmetric}\\
            \frac{2\delta_L+\delta_R}{3}&\text{for }k\neq1 \text{ and }\frac{\mathrm{d}J_k}{\mathrm{d}\delta}>0\\
            \frac{\delta_L+2\delta_R}{3}&\text{for }k\neq1 \text{ and }\frac{\mathrm{d}J_k}{\mathrm{d}\delta}<0
        \end{cases}
    \end{align}
    \jhedit{The resulting marginal increase $\Delta A_k$ in the aggregate performance is:}
    \begin{align}
        \Delta A_k=\begin{cases}
            \frac{3}{4}\theta(\delta_R-\delta_L)^2&\text{for }k=1\\
            \frac{1}{8}\theta (\delta_R-\delta_L)^2&\text{for }k\neq1 \text{ and }J_k \text{ symmetric}\\
            \frac{1}{3}\theta (\delta_R-\delta_L)^2&\text{otherwise}.
        \end{cases}
    \end{align}
    \label{theorem:1-step-greedy-for-single-step}
\end{theorem}
\jhedit{A complete proof appears in Appendix~\ref{appendix:proof-ttl}, based on geometric properties of $J_k(\delta)$ and the linear-gap assumptions.}

\begin{figure*}[!t]
    \centering
    \hfill
    \subfloat[Greedy Temporal Transfer Learning (GTTL)\label{fig:ttl-method-ttl}]{%
        \includegraphics[height=3.7cm]{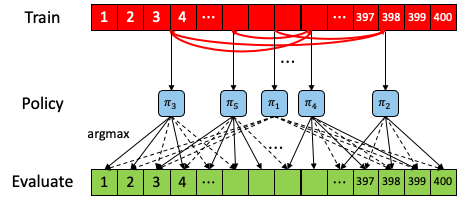}}
    \hfill
    \subfloat[Coarse-to-fine Temporal Transfer Learning (CTTL)\label{fig:ttl-method-ctl}]{%
        \includegraphics[height=3.7cm]{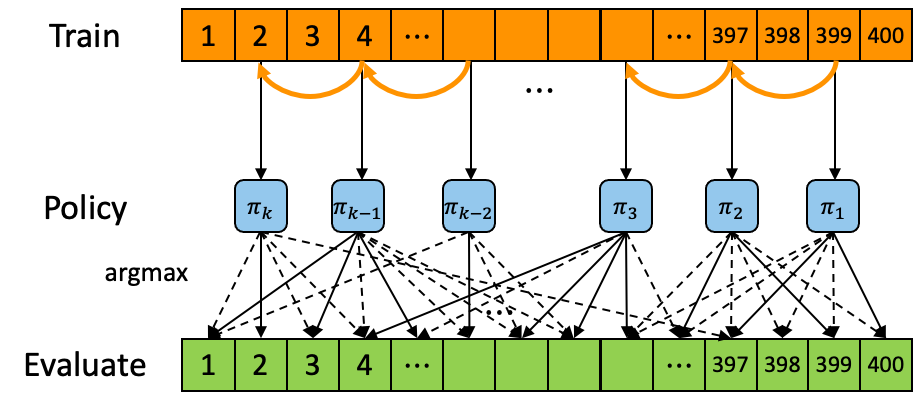}}
    \hfill
    \caption{
    \textbf{Illustrative figure of Temporal Transfer Learning (TTL) algorithms}: 
    Selecting the training task based on the TTL algorithm, evaluating each task based on the trained policies, and taking the best-performing policy for each task.
    }
    \label{fig:ttl-methods}
\end{figure*}

\jhedit{Leveraging the above assumptions and the estimated performance function, we propose the \emph{Greedy Temporal Transfer Learning (GTTL)} algorithm (Algorithm~\ref{alg:ttl-1-step}). This iterative algorithm selects the optimal hold duration to train on at each step. Initially, under Assumption~\ref{assume:same-transfer-slope}, the optimal training point is the median of the hold-duration range. For subsequent steps, the selection is guided by \Cref{theorem:1-step-greedy-for-single-step}. The transfer process continues until either the aggregate performance area is sufficiently covered or the transfer budget is exhausted. Figure~\ref{fig:ttl-methods} compares the TTL algorithms in terms of their transfer procedures.}

\begin{algorithm}[!t]
    \caption{Greedy Temporal Transfer Learning (GTTL)}
    \begin{algorithmic}[1]
    \renewcommand{\algorithmicrequire}{\textbf{Input:}}
    \renewcommand{\algorithmicensure}{\textbf{Output:}}
    \REQUIRE MDP $\jhedit{\mathcal{M}_\delta}$, Hold-duration range \jhedit{$[\delta_\text{min},\delta_\text{max}]$}, Upper bound area $A^*$, Termination criteria $\varepsilon$, Transfer budget $K$
    \ENSURE $J_k$ and $S_{k}$
    \\ \textit{Initialize} : $J_0(\delta)=0\ \forall\delta\in[\delta_\text{min},\delta_\text{max}]$, $A_k=0$, $S_k=\{\}$, $\fpi=\{\}$, $k=0$
    \WHILE {($A_k$ $\leq (1-\varepsilon)A^*$) and ($k\leq K$)}
    \STATE $\delta^{k+1} \gets \textbf{FindGreedyTransferPoint}(S_{k}, J_{k},\delta_\text{min},\delta_\text{max})$
    \STATE $S_{k+1}\gets S_{k}\cup\{\delta^{k+1}\}$
    \STATE $\pi_{k+1} \leftarrow \text{Train}(\mathcal{M}(\delta^{k+1}))$
    \STATE $\fpi \gets \fpi\cup\{\pi_{k+1}\}$
    \STATE $J_{k+1}(\delta^{k+1}) \gets J^{\pi_{k+1}}(\delta^{k+1})$
    \STATE $J_{k+1}(\delta) \gets \max(J_{k}(\delta), J_{k+1}(\delta^{k+1})-\Delta J(\delta^{k+1},\delta))$  \\
    $\quad\quad\quad\quad\quad\quad\quad\quad\quad\quad\quad\quad \forall \delta \in (\delta_\text{min},\delta_\text{max}) \backslash {\delta^{k+1}}$
    \STATE $A_{k+1} = \int_{\delta_\text{min}}^{\delta_\text{max}}{J_{k+1}(\delta)} \mathrm{d}\delta$
    \STATE $k \gets k+1$
    \ENDWHILE
    \RETURN $J_k$ and $S_{k}$
    \end{algorithmic}
    \label{alg:ttl-1-step}
\end{algorithm}

\Cref{alg:ttl-1-step} starts by initializing with the minimum and maximum hold duration, setting the performance of all hold duration to 0, initializing $J$ and $S$ to 0, and having an empty set for the policies.
It continues as long as the covered area is below a threshold or the number of source tasks is less than the budget.
For simplicity in notation, we propose substituting the whole area of $(\delta_\text{max}-\delta_\text{min})J^*$ with $A^*$.
Inside the loop, the algorithm chooses a new training task with a hold duration of $\delta^{k+1}$ and appends it to its set. It then trains a policy for this hold duration and adds it to the set of policies. 
After updating the performance with this new policy, the algorithm then calculates the area under this performance curve.
Once the loop finishes, the algorithm returns the best performance for each task ($J_k$) and a set of selected training tasks ($S_k$).
In \Cref{alg:ttl-1-step}, \Cref{alg:find-best-transfer-1-step} assists in identifying the greedy training source task, drawing insights from the shape of the estimated performance function $J$. 
This decision-making rule is grounded in \Cref{theorem:1-step-greedy-for-single-step}.

\begin{algorithm}[!t]
\caption{Find Greedy Transfer Point}
\begin{algorithmic}[1]
    \renewcommand{\algorithmicrequire}{\textbf{Function}}
    \REQUIRE FindGreedyTransferPoint($S_k, J_k,\delta_\text{min},\delta_\text{max}$)
    \STATE Cut a range of hold duration $[\delta_\text{min},\delta_\text{max}]$ into the segments split by $\delta^k \in S_k$
    \STATE \textcolor{teal}{// Choose $\delta^k$ within the segment of $[\delta_L, \delta_R]$ based on the slope of $J_k$ (\Cref{theorem:1-step-greedy-for-single-step})}
    \IF{$J_k$ is symmetric}
        \STATE $\delta^k \gets$ $\frac{\delta_L+\delta_R}{2}$
    \ELSIF{$J_k$ has positive slope}
        \STATE $\delta^k \gets$ $\frac{2\delta_L+\delta_R}{3}$
    \ELSIF{$J_k$ has negative slope}
        \STATE $\delta^k \gets$ $\frac{\delta_L+2\delta_R}{3}$
    \ENDIF
    \RETURN $\delta^k$
\end{algorithmic}
\label{alg:find-best-transfer-1-step}
\end{algorithm}

GTTL algorithm (\Cref{alg:ttl-1-step}) exhibits several noteworthy characteristics underpinning its functionality and efficiency. 
This algorithm is formulated as an anytime algorithm, meaning it can provide a valid solution even if stopped in the middle of the iterations. 
Beyond mere validity, GTTL algorithm offers performance assurances. At any given step $k$, GTTL not only provides a valid solution but also ensures a performance that is oriented towards optimization. 
For example, CTTL might struggle with finer tasks in the initial selection of the source task. 
This is because the trained policy is inherently skewed to excel in coarser tasks.
This means that while other methods like CTTL can also deliver valid results at step $k$, GTTL is specifically designed to offer a performance closer to optimal at every individual step.
This property ensures flexibility and usability under varying operational constraints, allowing continuous solution improvement with each additional source task.
This intelligent selection process ensures efficient knowledge transfer and promotes effective learning across different stages of the algorithm's execution. 

\subsection{\jhedit{Theoretical Analysis for Optimal Temporal Transfer Learning}}

\jhedit{A natural question is how effective these incremental transfer learning strategies are over multiple iterations. One optimality criterion is the best performance achieved within $K$ steps. Analogous to K-means clustering, where the number of clusters affects both granularity and computational cost, the transfer budget $K$ directly influences the quality of the solution. In practice, we are also interested in the minimum number of source tasks, denoted by $K^*(\varepsilon)$, required to achieve a performance within a suboptimality threshold $\varepsilon$. This metric quantifies the algorithm's efficiency in exploring the solution space.}

\begin{definition}[Cumulative area under the estimated performance function at each iteration]
    \jhedit{Let $A_k$ denote the cumulative area under the estimated performance function after $k$ iterations. The optimal number of steps $K^*(\varepsilon)$ is defined as the minimum $k$ such that}
    \begin{align}
        A_{K^*(\varepsilon)} &\geq (1-\varepsilon)A^*
    \end{align}
    \jhedit{where $A^*$ represents the maximum possible aggregate performance.}
    \label{def:marginal-and-cumulative-f}
\end{definition}

\jhedit{As iterations progress, the cumulative gain $A_k$ approaches $A^*$, indicating improved performance coverage with each additional source task. \Cref{def:marginal-and-cumulative-f} essentially conveys about the optimal $K$ ($K^*$) that is estimated to cover the area of $(1-\varepsilon)A^*$, having remaining area represented by the ratio of $\varepsilon$.
Hence, the definition offers insight into how many steps are required to meet the prespecified level of performance as we iterate. Although evaluating the potential coverage for each monotonic segment is critical, deriving a closed-form solution for the optimal policy is challenging due to varying segment shapes.}
\begin{figure}[!t]
    \centering
    \includegraphics[width=3in]{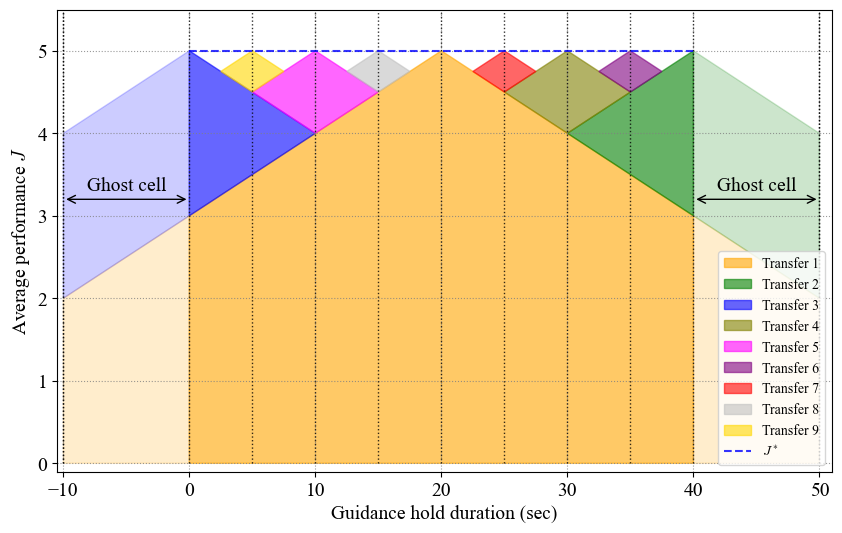}
    \caption{Illustrative figure for the \textit{lower bound of Greedy Temporal Transfer Learning (GTTL)} with the ghost cells at the end of the segments.}
    \label{fig:area-fill-lower}
\end{figure}

\jhedit{For a tractable closed-form analysis, we derive a} lower bound performance of GTTL by \jhedit{introducing} ghost cells at \jhedit{the boundaries} as depicted in \Cref{fig:area-fill-lower}.
\jhedit{In this lower-bound case, we choose $\delta_\text{max}$ and $\delta_\text{min}$ for the second and third source tasks, respectively, thereby creating a symmetric V-shaped performance profile for all sub-segments. We denote the lower bound of the cumulative area after $k$ iterations as $\Tilde{A}_{k}$.}

\begin{figure*}[!t]
    \centering
    \hfill
    \subfloat[Greedy Temporal Transfer Learning \label{fig:area-fill-f}]{
        \includegraphics[width=2.3in]{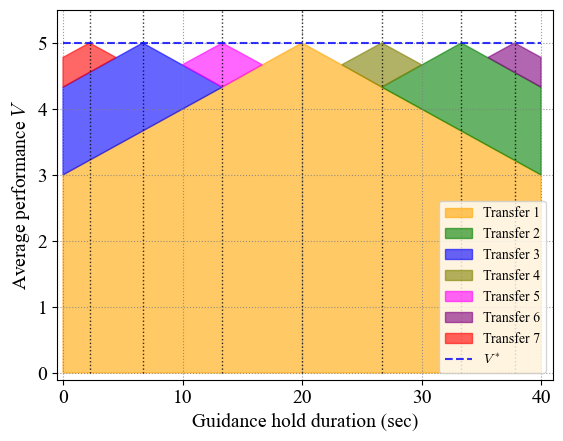}}
    \hfill
    \subfloat[Coarse-to-fine Temporal Transfer Learning \label{fig:area-fill-new-ctl}]{
        \includegraphics[width=2.3in]{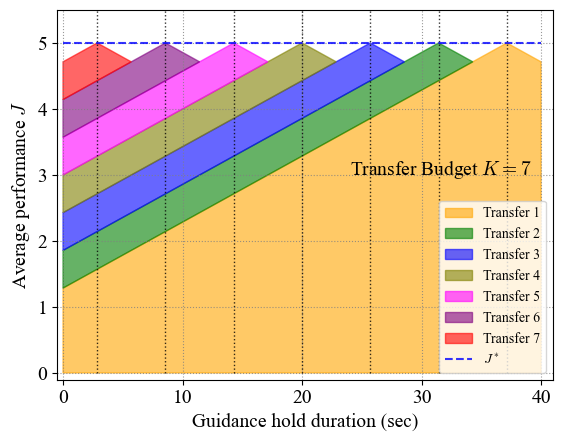}}
    \hfill
    \subfloat[Random Temporal Transfer Learning \label{fig:area-fill-new-rtl}]{
        \includegraphics[width=2.3in]{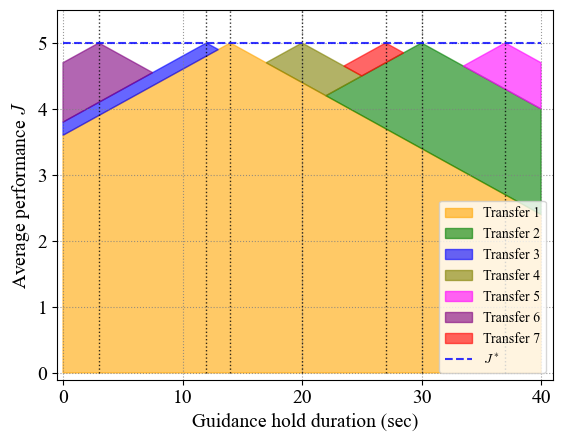}}
    \hfill
    \caption{Illustrative figures for comparing marginal area increase at each iteration by Greedy Temporal Transfer Learning (GTTL), Coarse-to-fine Temporal Transfer Learning (CTTL) for a given budget $K$, and Random Temporal Transfer Learning (RTTL).}
    \label{fig:illust-area-fill-f-g}
\end{figure*}

\begin{lemma}[Lower bound of Greedy Temporal Transfer Learning]
    \jhedit{For all iterations $k=1,\dots,K$, the cumulative area under the estimated performance function satisfies:}
    \begin{align}
        A_{k} &\geq \Tilde{A}_{k} \quad \forall k=1,...,K
    \end{align}
    \label{lemma:lower-bound-gtl}
\end{lemma}
\begin{proof}
    \jhedit{Since GTTL always selects the optimal task to maximize the area at each step, the cumulative area $A_{k}$ is always at least as large as the lower-bound area $\Tilde{A}_{k}$, which is computed using suboptimal (ghost cell) choices.}
\end{proof}

\jhedit{It follows that if there exists an integer $n$ for which $\Tilde{A}_{n} \geq (1-\varepsilon)A^*$, then $A_n \geq (1-\varepsilon)A^*$, since $A_n \geq \Tilde{A}_{n}$. In fact, at least $\frac{4\varepsilon+1}{4\varepsilon}$ steps are required to cover $(1-\varepsilon)A^*$.}

\begin{theorem}[The number of source tasks required to cover the area]
    \jhedit{If there exists an integer $n$ such that $\Tilde{A}_{n}\geq (1-\varepsilon)A^*$, then $A_n\geq (1-\varepsilon)A^*$. Moreover, at least $\frac{4\varepsilon+1}{4\varepsilon}$ iterations are necessary to achieve this performance threshold.}
    \label{theorem:area-fill-ghost}
\end{theorem}

We prove \Cref{theorem:area-fill-ghost} by leveraging the lower bound cumulative area of GTTL as outlined in \Cref{lemma:lower-bound-gtl}, which has a streamlined expression of $A_n$.
The comprehensive proof is provided in Appendix~\ref{appendix:proof-area-fill-ghost}.

\subsection{Bounded Suboptimality}
\jhedit{If the transfer budget is known in advance, a more structured algorithm than GTTL can be designed. This motivates our introduction of \emph{Coarse-to-fine Temporal Transfer Learning (CTTL)}, which selects source tasks uniformly across the hold-duration range. CTTL begins with coarser tasks and progressively transitions to finer ones. For example, with a budget of $7$ source tasks and a hold-duration range from $1$ to $40$, training might begin with a hold duration of approximately $37.14$, then proceed to $31.43$, $25.71$, $20$, $14.29$, $8.57$, and finally $2.86$, reflecting a diminishing granularity (see Figure~\ref{fig:area-fill-new-ctl}).}

\jhedit{The advantage of starting with coarser tasks lies in their limited effective horizon, making them easier to solve. Prior work suggests that a coarse-to-fine transfer learning approach is beneficial when adapting a pre-trained policy from a coarser to a finer task \cite{wei_coarse--fine_2018, wang_coarse--fine_2023}.}

\begin{algorithm}[!ht]
    \caption{Coarse-to-fine Temporal Transfer Learning (CTTL)}
    \begin{algorithmic}[1]
    \renewcommand{\algorithmicrequire}{\textbf{Input:}}
    \renewcommand{\algorithmicensure}{\textbf{Output:}}
    \REQUIRE MDP $\jhedit{\mathcal{M}_\delta}$, Range of hold duration $[\delta_\text{min},\delta_\text{max}]$, Transfer budget $K$
    \ENSURE $J_k$ and $S_{k}$
    \\ \textit{Initialize} : $J_0(\delta)=0\ \forall\delta\in[\delta_\text{min},\delta_\text{max}]$, $S_k=\{\}$, $\fpi=\{\}$, $k=0$
    \WHILE {$k\leq K$}
        \STATE $\delta^{k+1}=\delta_\text{max}-\frac{2k+1}{2K}(\delta_\text{max}-\delta_\text{min})$
        \STATE $S_{k+1}\gets S_k\cup\{\delta^{k+1}\}$
        \STATE $\pi_{k+1} \leftarrow \text{Train}(\mathcal{M}(\delta^{k+1}))$
        \STATE $\fpi\gets\fpi\cup\{\pi_{k+1}\}$
    \STATE $J_{k+1}(\delta^{k+1}) \gets J^{\pi_{k+1}}(\delta^{k+1})$
    \STATE $J_{k+1}(\delta) \gets \max(J_{k}(\delta), J_{k+1}(\delta^{k+1})-J_{\delta^{k+1} \to \delta})$  \\
    $\quad\quad\quad\quad\quad\quad\quad\quad\quad\quad\quad\quad \forall \delta \in (\delta_\text{min},\delta_\text{max}) \backslash {\delta^{k+1}}$
    \STATE $k \gets k+1$
    \ENDWHILE
    \RETURN $J_k$ and $S_{k}$
    \end{algorithmic}
    \label{alg:cttl}
\end{algorithm}

\Cref{lemma:optimality-ctl} states the optimality of the CTTL algorithm, which selects its subsequent transfer task contingent on the allocated transfer budget $K$.

\begin{lemma}[Optimality of Coarse-to-fine Temporal Transfer Learning]
    \jhedit{Coarse-to-fine Temporal Transfer Learning (CTTL) algorithm establishes the optimality under Assumptions~\ref{assume:constant-upperbound-j}--\ref{assume:upperbound-J}. Starting from the coarsest task with hold duration $(\delta_\text{max}-\frac{\delta_\text{max}-\delta_\text{min}}{2K})$, CTTL transitions to finer tasks with uniform spacing of $\frac{\delta_\text{max}-\delta_\text{min}}{K}$. The optimal estimated performance after $K$ iterations is given by:}
    \begin{equation}
        A^\text{CTTL}_K=(1-\frac{1}{4K})\theta(\delta_\text{max}-\delta_\text{min})^2
    \end{equation}
    \label{lemma:optimality-ctl}
\end{lemma}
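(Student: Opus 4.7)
The plan is to characterize the estimated performance function $J_K(\delta)$ as an upper envelope of ``tent''-shaped functions, decompose the uncovered area above this envelope into a sum of triangle areas governed by the spacings between consecutive source tasks, and minimize this sum by a convex Lagrangian argument to show the equidistant CTTL schedule is globally optimal.

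First, under Assumptions \ref{assume:constant-upperbound-j}--\ref{assume:same-transfer-slope}, any policy $\pi_i$ trained at $\delta^{(i)}$ attains $J^*$ at $\delta^{(i)}$ and generalizes with a symmetric linear decay of slope $\theta$, so after $K$ source tasks chosen at ordered points $\delta^{(1)} < \cdots < \delta^{(K)}$ in $[\delta_\text{min}, \delta_\text{max}]$, the estimated performance is the upper envelope
\begin{equation}
J_K(\delta) \;=\; \max_{1\le i \le K}\bigl(J^* - \theta|\delta - \delta^{(i)}|\bigr).
\end{equation}
Assumption \ref{assume:upperbound-J} ($\theta \le J^*/D$ with $D := \delta_\text{max}-\delta_\text{min}$) ensures that within $[\delta_\text{min},\delta_\text{max}]$ this expression stays nonnegative regardless of placement, so no clipping occurs in the area computation.

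Second, I would write the uncovered area $U := J^*D - A_K$ as a sum over three types of regions. Let $d_L = \delta^{(1)}-\delta_\text{min}$, $d_R = \delta_\text{max}-\delta^{(K)}$, and $s_i = \delta^{(i+1)} - \delta^{(i)}$. The left boundary piece is a right triangle with legs $d_L$ and $\theta d_L$, contributing $\tfrac{\theta}{2}d_L^2$; the right boundary contributes $\tfrac{\theta}{2}d_R^2$ symmetrically. Between two consecutive peaks the two tents meet at the midpoint (by Assumption \ref{assume:same-transfer-slope}), so the valley is two congruent triangles with combined area $\tfrac{\theta}{4}s_i^2$. Hence
\begin{equation}
U \;=\; \tfrac{\theta}{2}d_L^2 + \tfrac{\theta}{2}d_R^2 + \sum_{i=1}^{K-1} \tfrac{\theta}{4}s_i^2, \qquad d_L + d_R + \sum_{i=1}^{K-1} s_i = D.
\end{equation}

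Third, since $U$ is strictly convex in $(d_L, d_R, s_1, \ldots, s_{K-1})$ under a linear equality constraint, the Lagrangian first-order conditions $\theta d_L = \theta d_R = \tfrac{\theta}{2}s_i = \lambda$ give the unique global minimizer $d_L = d_R = D/(2K)$ and $s_i = D/K$. Unpacking, $\delta^{(i)} = \delta_\text{min} + \tfrac{(2i-1)D}{2K}$, which matches exactly the schedule produced by Algorithm~\ref{alg:cttl}. Substituting back yields $U^\star = K\cdot \tfrac{\theta}{4}(D/K)^2 = \tfrac{\theta D^2}{4K}$, so with $J^* = \theta D$ (the tight case stipulated after Assumption~\ref{assume:upperbound-J} without loss of generality),
\begin{equation}
A^\text{CTTL}_K \;=\; J^*D - U^\star \;=\; \theta D^2 - \tfrac{\theta D^2}{4K} \;=\; \bigl(1 - \tfrac{1}{4K}\bigr)\theta(\delta_\text{max}-\delta_\text{min})^2.
\end{equation}

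The main obstacle is the second step, namely justifying the clean triangular decomposition globally rather than locally: I need to verify that no source task's tent is dominated by the envelope of the others (otherwise that tent contributes nothing and the accounting changes), and that at the boundaries no tent is cut off by the interval endpoints in a way that alters the triangle areas. Both reduce to the inequalities $d_L, d_R \le D/(2K) \le D/2$ and $s_i \le D$, together with $\theta \cdot D \le J^*$; these hold at the optimum and, more importantly, it suffices to check them on the candidate minimizer, since any placement in which a tent is dominated by others can only increase $U$ compared to retaining that tent as an independent triangle, so the equidistant placement remains a valid upper bound on $A_K$ over all placements. This monotonicity-in-removal argument closes the gap between local geometric bookkeeping and the global optimality claim.
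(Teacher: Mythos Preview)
Your proposal is correct and shares the paper's core decomposition: the uncovered area is written as $\tfrac{\theta}{2}l_1^2+\tfrac{\theta}{4}l_2^2+\cdots+\tfrac{\theta}{4}l_K^2+\tfrac{\theta}{2}l_{K+1}^2$ over the $K{+}1$ gaps, subject to $\sum l_i=D$, which is exactly your $(d_L,s_1,\ldots,s_{K-1},d_R)$ parametrization. The only substantive difference is the tool used to minimize this constrained quadratic. The paper applies the Cauchy--Schwarz inequality with $\mathbf{u}=(l_1/\sqrt{2},\,l_2/2,\ldots,l_K/2,\,l_{K+1}/\sqrt{2})$ and $\mathbf{v}=(\sqrt{2},2,\ldots,2,\sqrt{2})$, reading off both the bound $\theta D^2/(4K)$ and the equality condition $2l_1=l_2=\cdots=l_K=2l_{K+1}$ in one stroke; you instead invoke strict convexity plus first-order Lagrangian conditions $\theta d_L=\theta d_R=\tfrac{\theta}{2}s_i=\lambda$. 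Both are standard and yield the identical optimizer and value. Your route is slightly more transparent about why the minimizer is unique and global, while Cauchy--Schwarz is more self-contained. Your final paragraph on tent domination and boundary clipping is a useful addition that the paper leaves implicit: the paper simply assumes the triangular bookkeeping is valid without arguing that degenerate placements (a tent swallowed by neighbors) cannot beat the equispaced one.
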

\jhedit{The optimality of CTTL can be established via the equality condition of the Cauchy--Schwarz inequality; see Appendix~\ref{appendix:proof-ctl-optimality} for details.}

\jhedit{In practice, when the optimal number of source tasks $K^*(\varepsilon)$ is known, CTTL tends to outperform GTTL. However, precise knowledge of the transfer budget is often unavailable, making it important to quantify the suboptimality gap between GTTL and the oracle-like CTTL. \Cref{theorem:suboptimality-ttl} provides bounds on the suboptimality of GTTL relative to CTTL for a given transfer budget $K$.}
\begin{theorem}[Suboptimality of Greedy Temporal Transfer Learning]
    \jhedit{The suboptimality gap of GTTL relative to CTTL is bounded by}
    \begin{align}
        \begin{cases}
            \frac{1}{4K(K-1)}\theta(\delta_\text{max}-\delta_\text{min})^2 & \text{ for } K=2^i+1\\
            \frac{1}{2(K-1)^2}\theta(\delta_\text{max}-\delta_\text{min})^2 & \text { otherwise}
        \end{cases}
    \end{align}
    where $i\in \bN_0$.
    \label{theorem:suboptimality-ttl}
\end{theorem}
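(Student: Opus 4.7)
Let $D = \delta_{\text{max}} - \delta_{\text{min}}$. The plan is to upper-bound the suboptimality gap $A^{\text{CTTL}}_K - A^{\text{GTTL}}_K$ by combining \Cref{lemma:optimality-ctl} (which gives $A^{\text{CTTL}}_K = (1 - 1/(4K))\theta D^2$) with the ghost-variant lower bound on GTTL from \Cref{lemma:lower-bound-gtl}. Since $A^{\text{GTTL}}_K \geq \tilde{A}_K$, it suffices to upper-bound $A^{\text{CTTL}}_K - \tilde{A}_K$.

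For the case $K = 2^i + 1$, I would argue that after $K$ ghost steps the sources form an equispaced grid: steps 2 and 3 place sources at $\delta_{\text{min}}$ and $\delta_{\text{max}}$, and subsequent steps recursively bisect the resulting symmetric V-valleys on a dyadic schedule. This yields $K - 1 = 2^i$ identical V-valleys of width $D/(K-1)$, each contributing an uncovered triangular area of $\theta(D/(K-1))^2/4$. Summing gives $\tilde{A}_K = (1 - 1/(4(K-1)))\theta D^2$, hence $A^{\text{CTTL}}_K - \tilde{A}_K = \theta D^2 \bigl[1/(4(K-1)) - 1/(4K)\bigr] = \theta D^2/(4K(K-1))$, matching the first case exactly.

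For general $K$ with $2^i + 1 < K < 2^{i+1} + 1$, I would track the ghost state after $j = K - 2^i - 1$ additional bisections: at this point $\tilde{A}_K$ consists of $2^i - j$ unfilled V-valleys of width $D/2^i$ together with $2j$ newly-created V-valleys of width $D/2^{i+1}$, yielding a closed-form expression for the uncovered area. The algebraic task is then to simplify this in terms of $K$ alone and verify $A^{\text{CTTL}}_K - \tilde{A}_K \leq \theta D^2/(2(K-1)^2)$, which is the looser second-case bound; the slack $1/(2(K-1)^2) \geq 1/(4K(K-1))$ reflects the non-clean intermediate ghost state.

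The hardest step is the second case: between the clean iterations the ghost bound is not tight against CTTL, and the mixed V-valley widths make the bookkeeping delicate. If the direct ghost-variant estimate proves too loose at some interior $K$, I would fall back on the observation that actual GTTL strictly outperforms the ghost variant there by selecting the segment with maximum marginal gain via \Cref{theorem:1-step-greedy-for-single-step}, effectively interpolating between the tight bounds at the neighboring clean values $K = 2^i + 1$ and $K = 2^{i+1} + 1$.
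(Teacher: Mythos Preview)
Your plan is correct and matches the paper's approach: bound $A^{\text{CTTL}}_K - A^{\text{GTTL}}_K$ via \Cref{lemma:optimality-ctl} together with the ghost lower bound $\tilde A_K \le A^{\text{GTTL}}_K$ from \Cref{lemma:lower-bound-gtl}, and handle $K=2^i+1$ and intermediate $K$ separately. The only cosmetic difference is that for intermediate $K$ you compute $\tilde A_K$ by directly counting the mixed collection of V-valleys (of widths $D/2^i$ and $D/2^{i+1}$), whereas the paper observes that all marginal increments $\tilde A_{K+1}-\tilde A_K$ are equal on a dyadic block and linearly interpolates from $\tilde A_{2^i+1}$; both routes yield the same closed form for $\tilde A_K$ and the remaining step is pure algebra against $1/(2(K-1)^2)$. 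Your fallback paragraph is unnecessary: the ghost bound alone is already tight enough for the stated inequality at every interior $K$, so you will not need to invoke the sharper GTTL behavior from \Cref{theorem:1-step-greedy-for-single-step}.
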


\jhedit{
The detailed proof is provided in Appendix~\ref{appendix:proof-ttl-suboptimality}, where the suboptimality bounds of GTTL relative to CTTL are derived for the two cases specified above.
}

\section{Simulation Experiments}
This section elucidates the simulation experiments conducted to address our primary research questions. The main purpose of our investigation is to explore the potential of human-compatible control serving as an immediate surrogate for AVs and to verify the degree to which such control can optimize traffic performance at a system level. We conducted many experimental trials in various environments to obtain valuable answers to these essential questions.

\begin{figure}[!t]
    \centering
    \includegraphics[width=3.5in]{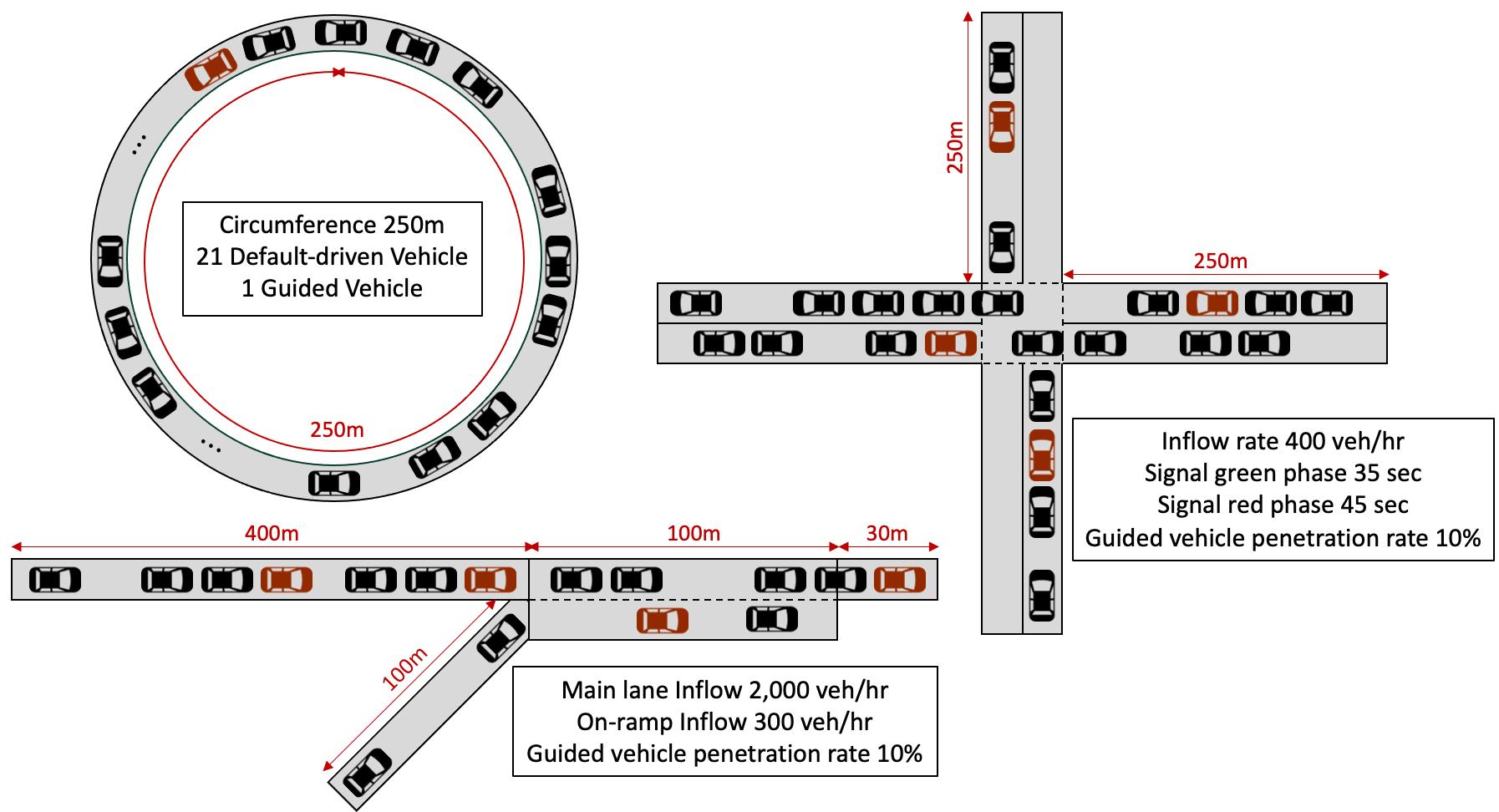}
    \caption{\textbf{Modular road networks.} Three traffic scenarios for mixed autonomy roadway settings: single-lane ring (top left), highway ramp (bottom), and signalized intersection (top right).}
    \label{fig:three-scenarios}
\end{figure}
\subsection{Modular Road Networks}

\noindent In mixed-autonomy roadway settings, we delve into various traffic scenarios as explored in prior works \cite{yan_reinforcement_2021, wu_flow_2022, yan_unified_2022}, including single-lane ring, highway ramp, and signalized intersection networks, as depicted in Figure~\ref{fig:three-scenarios}. 
Each scenario has distinct objectives; for instance, the single-lane ring and intersection aim to elevate all vehicles' average velocity, while the highway ramp scenario focuses on increasing the outflow given a constant inflow. The signalized intersection scenario employs a multitask RL strategy, simulating varied penetration rates to accommodate different levels of human-guided vehicle presence, \jhedit{with an evaluation of a penetration rate of 0.1 to assess the RL policy's performance.}
\jhedit{Default-driven vehicles, which are not equipped with the guidance system, adhere to the Intelligent Driver Model (IDM) for car-following \cite{treiber_congested_2000}.}

\jhedit{In the single-lane scenario, the state space is defined by the ego vehicle's speed, the leading vehicle's speed, and the headway. For the highway ramp scenario, the state includes the ego vehicle's speed, relative positions and speeds of the leading and following vehicles in the same lane, and those of the following vehicle on the ramp. In the intersection scenario, the state comprises the ego vehicle's speed, the distance remaining to the intersection, the traffic signal phase, as well as the relative positions and speeds of the leading, following, and adjacent vehicles, the current speed limit, and lane and road identifiers.

The action space varies with the type of guidance employed. For acceleration guidance, a continuous action space ranging from $-1$ to $1$ is utilized, directly influencing the vehicle's acceleration up to a \jhedit{maximum} of 2.5$\text{m/s}^2$. For speed guidance, a discrete action space is defined, with ten actions ranging from $0$ to $1$, where the chosen action is scaled by the speed limit to determine the vehicle's target speed.}
\jhedit{The respective reward functions are tailored to the objectives of each scenario. The reward functions for the single-lane ring and highway ramp are the average speed and throughput of the system, respectively. In the signalized intersection scenario, the reward function is defined as the average speed of all vehicles alongside other factors such as stopping time, abrupt acceleration, and fuel consumption. A thorough examination of these scenarios and reward formulations is provided in Appendix~\ref{appendix:exp-modular-road-network}.}

\subsection{Experimental Setup}
\noindent We utilize the microscopic traffic simulation called Simulation of Urban MObility (SUMO) \cite{SUMO2018} v.1.16.0 and \jhedit{its accompanying Python API, TRACI, to establish a dynamic link between our algorithmic framework and the SUMO environment. This integration is critical for implementing and testing our traffic management strategies in a controlled, simulated setting}.
The experiments used the MIT Supercloud with 48 CPU cores \cite{reuther_interactive_2018}.
\jhedit{For our numerical experiments, we employed the Trust Region Policy Optimization (TRPO) algorithm \cite{schulman_trust_2017}, coupled with a Multilayer Perceptron (MLP) neural network architecture featuring two hidden layers, each with 64 units, and employing the $\tanh$ activation function.}
\jhedit{We trained and tested two different types of guidance: continuous action space for acceleration guidance and discretized action space for speed guidance.}
We evaluated the system's performance over a range of guidance hold duration $\delta \in [0.1, 40]$. 
In our simulation experiments, we simplify the analysis by setting $\delta_\text{min}=0$, and we round the calculated $\delta$ to the nearest multiple of 10 \jhedit{when selecting the source training task}.
The detailed experimental setup is explained in \Cref{fig:three-scenarios} and Appendix~\ref{appendix:exp-modular-road-network}.

\begin{figure*}[!t]
    \centering
    \subfloat[Single-lane ring\label{fig:ring-scratch}]{%
        \includegraphics[height=4.5 cm]{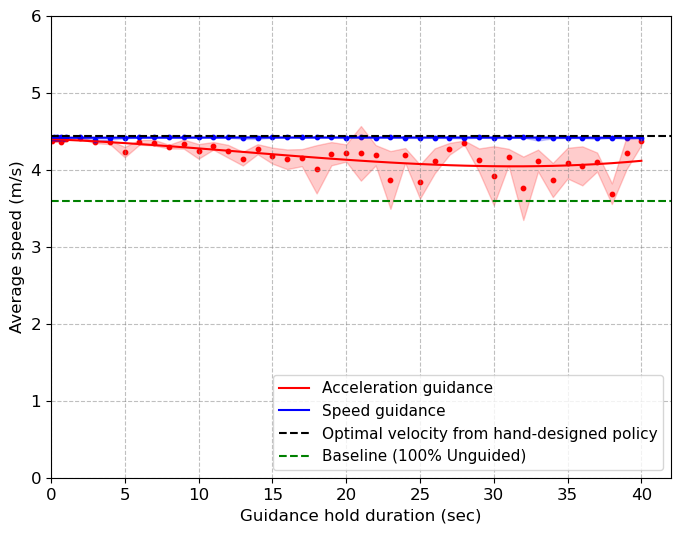}}
    \hfill
    \subfloat[Highway ramp\label{fig:ramp-scratch}]{%
        \includegraphics[height=4.5cm]{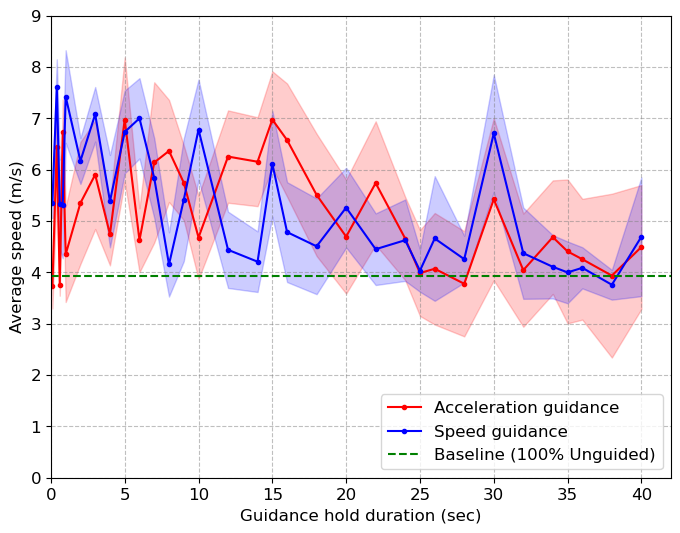}}
    \hfill
    \subfloat[Signalized intersection\label{fig:intersection-scratch}]{%
        \includegraphics[height=4.5cm]{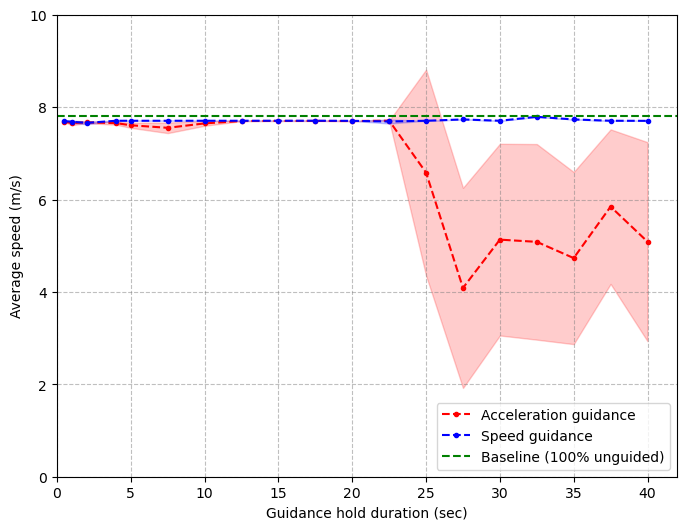}}
    \caption{System performance (average speed of all vehicles) for three traffic scenarios in mixed autonomy roadway settings. \jhedit{Each task with different hold durations is trained exhaustively.}}
    \label{fig:three-scratches}
\end{figure*}

\subsection{Baselines}
We compare our TTL approaches with several baselines. 
These baselines represent various strategies for learning and transfer in the context of coarse-grained advisory autonomy tasks.

\subsubsection{100\% Unguided}
In this baseline, all vehicles are unguided, following the Intelligent Driver Model (IDM) car-following models. This scenario represents a completely decentralized system without any reinforcement learning.
\subsubsection{Oracle Transfer}
\jhedit{The Oracle Transfer benchmark is an idealized scenario where we train separate models for each possible source task. Once trained, we execute a zero-shot transfer by applying each model to every target task, selecting the most successful model for each.}

\subsubsection{Exhaustive RL}
This strategy represents the classic approach to machine learning, where \jhedit{each task is trained individually and evaluated with its corresponding in-distribution trained model.} The performance is evaluated by calculating the average performance across all tasks. \jhedit{\Cref{fig:three-scratches} illustrates the average speed of all vehicles for three traffic networks trained exhaustively.}

\subsubsection{Multitask RL}
\jhedit{The multitask reinforcement learning framework is designed to simultaneously train a single policy across various tasks \cite{belletti_expert_2018, yan_unified_2022}, here specifically by varying the guidance hold duration between 1 to 40 seconds. Each worker in the training process is assigned a specific task and contributes to a shared experience buffer. Upon completion of rollouts by all workers, the policy is updated based on the collective data in the buffer. This approach aims to explore the potential synergies and trade-offs that arise when a policy is exposed to multiple tasks during the learning process, potentially leading to more robust and generalizable policies.}
\subsubsection{Random Temporal Transfer Learning (RTTL)}
Random Temporal Transfer Learning (RTTL) chooses tasks for transfer from a pool of temporal tasks at random. This scenario represents a non-deterministic transfer learning strategy and serves as a stochastic comparison point for our deterministic GTTL approach. 
From all the policies from the source task at each iteration, we select the top-performing one for tasks with varying hold duration. (\Cref{fig:area-fill-new-rtl})

\subsection{\jhedit{Temporal Transfer Learning (TTL) results}}
\begin{figure*}[!t]
    \centering
    \subfloat[Single lane ring with acceleration guidance\label{fig:ring-acc-transfered}]{%
        \includegraphics[height=4.7cm]{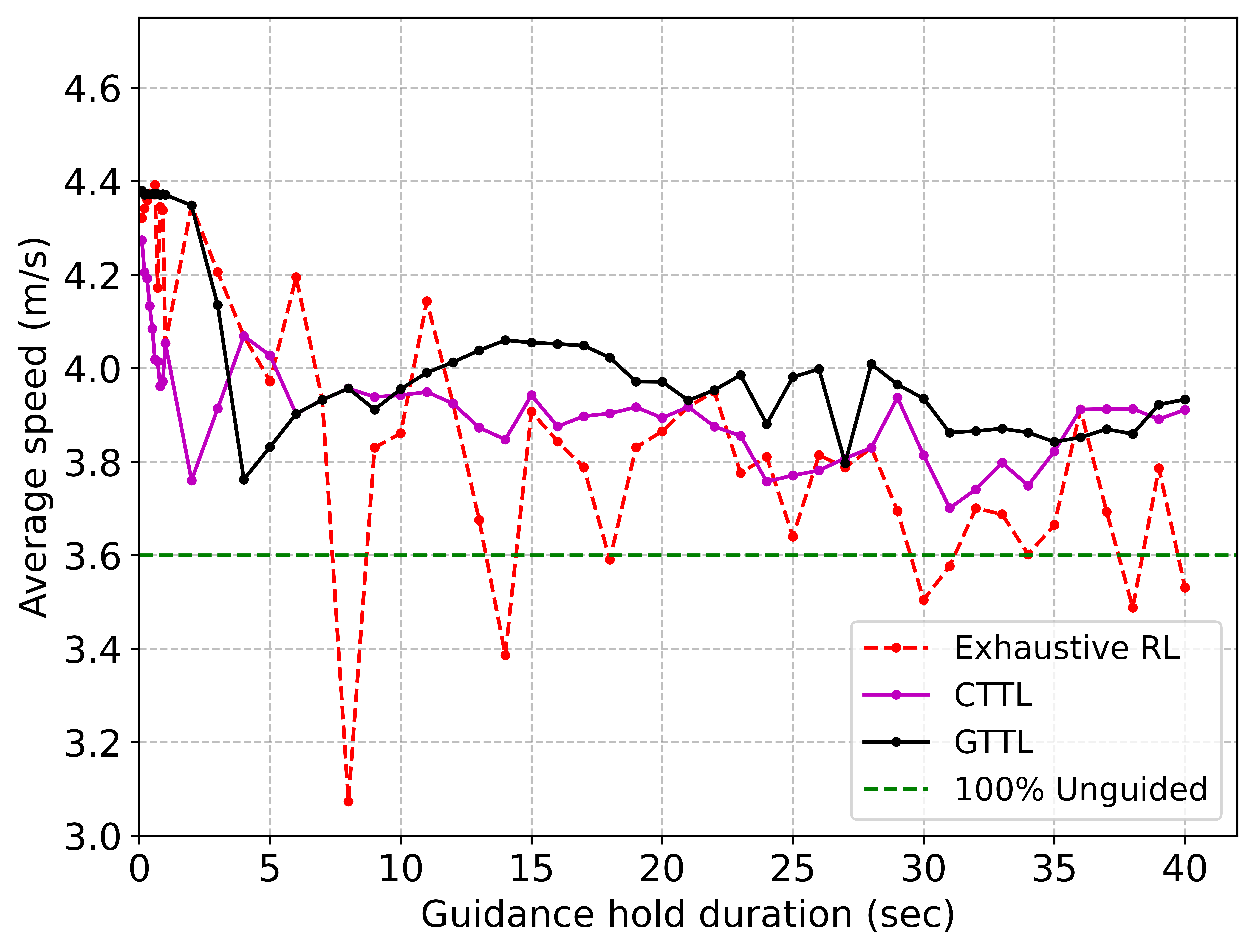}}
    \subfloat[Highway ramp with acceleration guidance\label{fig:ramp-acc-transfered}]{%
        \includegraphics[height=4.7cm]{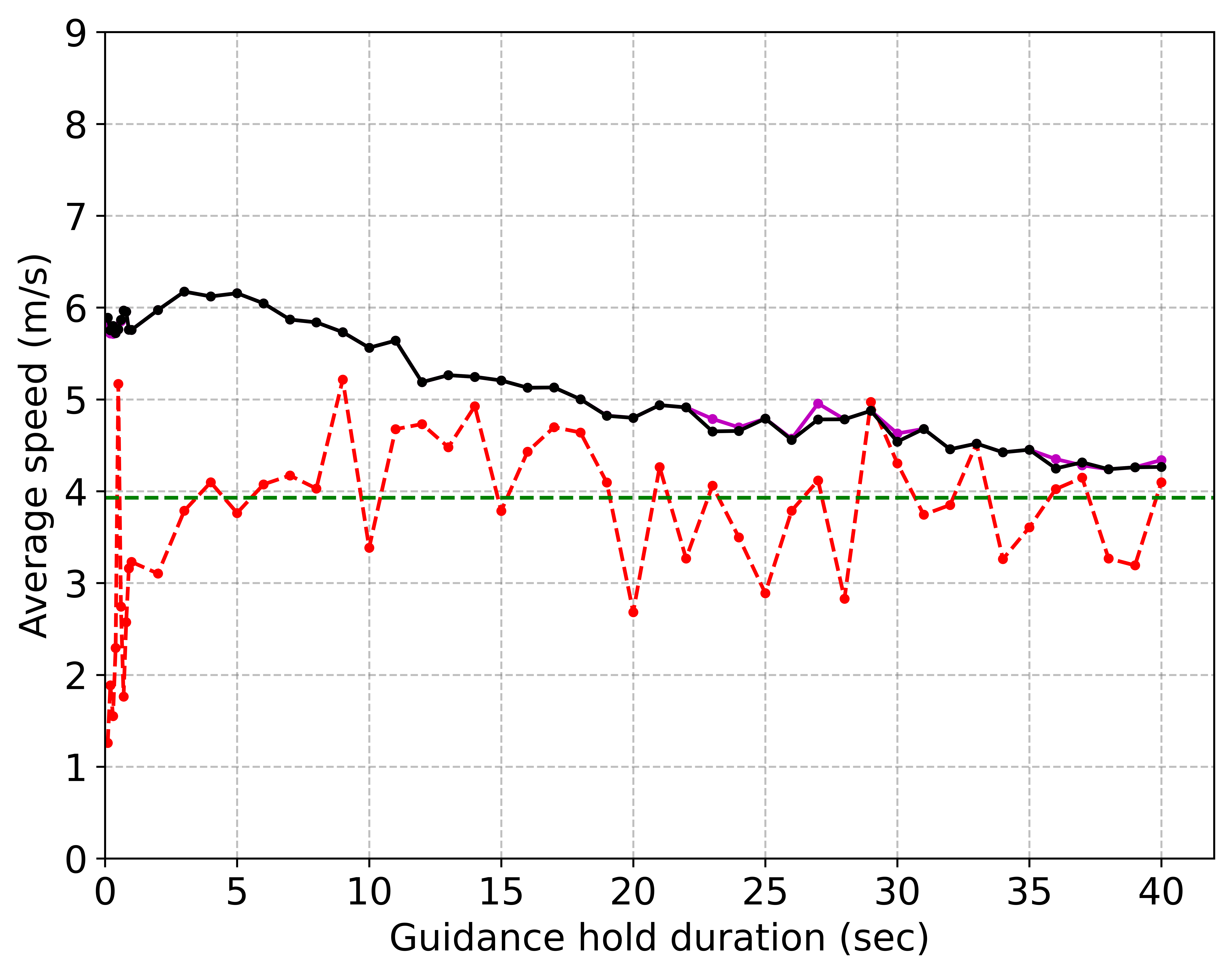}}
    \subfloat[Signalized intersection with acceleration guidance\label{fig:intersection-acc-transfered}]{%
        \includegraphics[height=4.7cm]{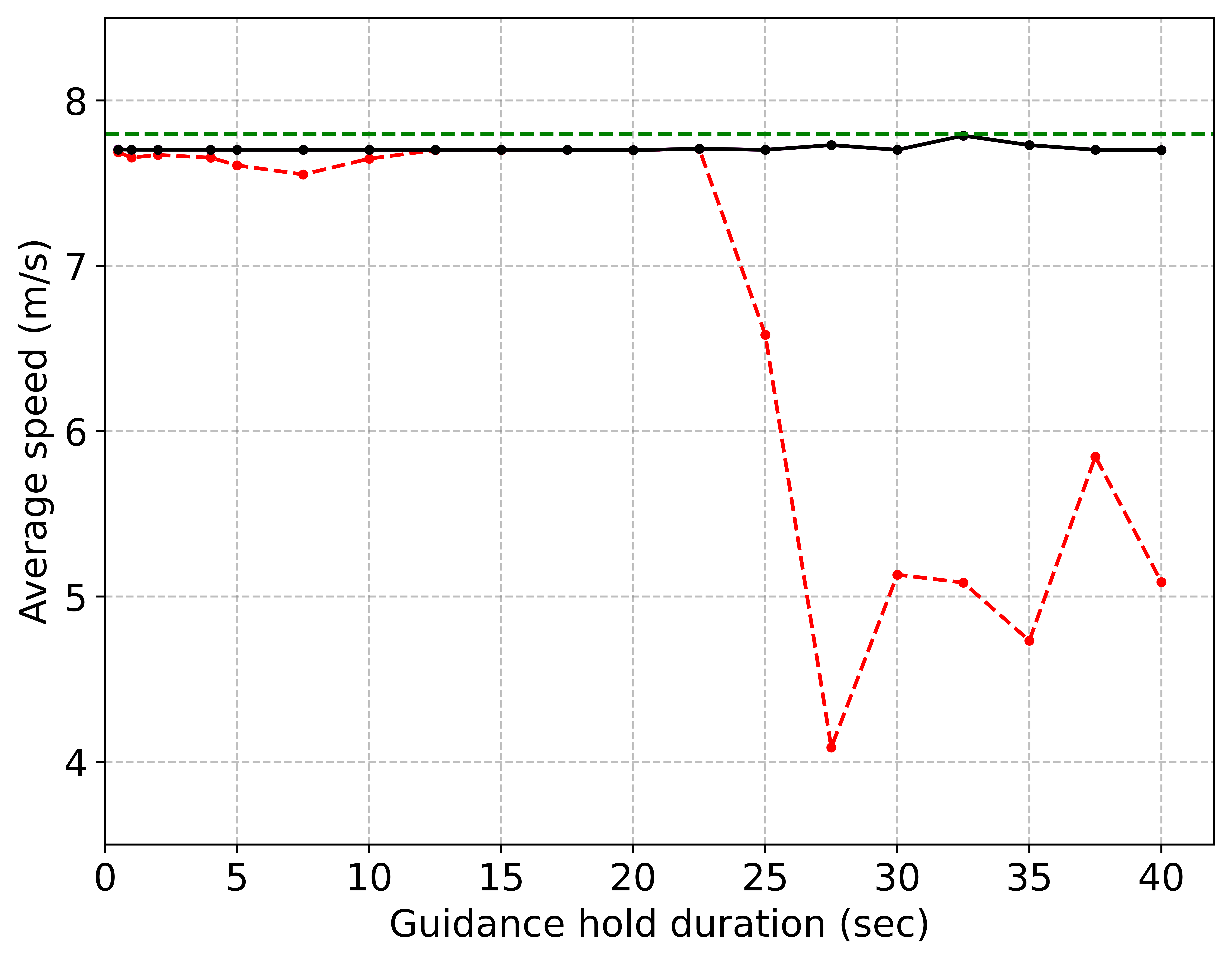}}
    \hfill
    \subfloat[Single lane ring with speed guidance\label{fig:ring-vel-transfered}]{%
        \includegraphics[height=4.7cm]{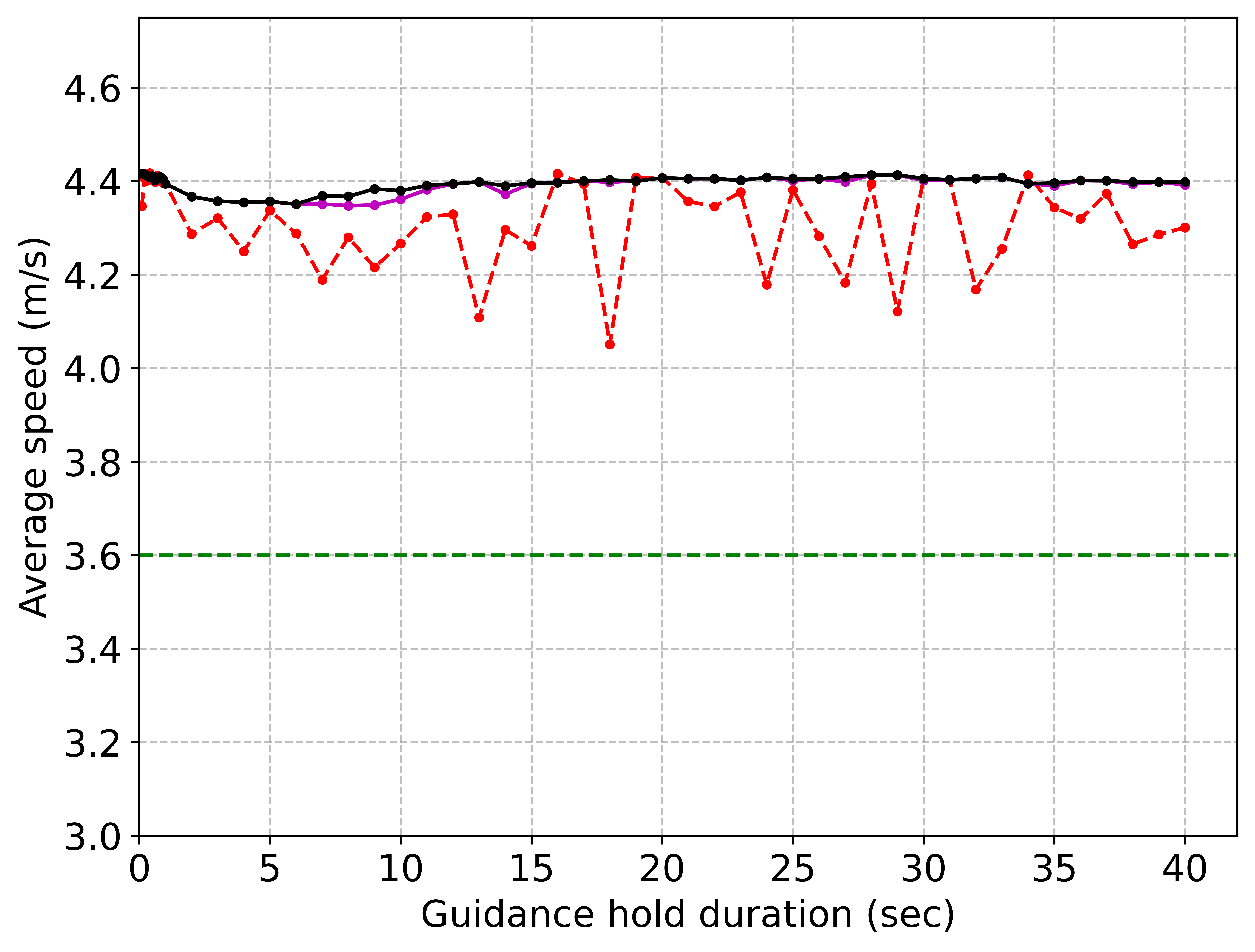}}
    \subfloat[Highway ramp with speed guidance\label{fig:ramp-vel-transfered}]{%
        \includegraphics[height=4.7cm]{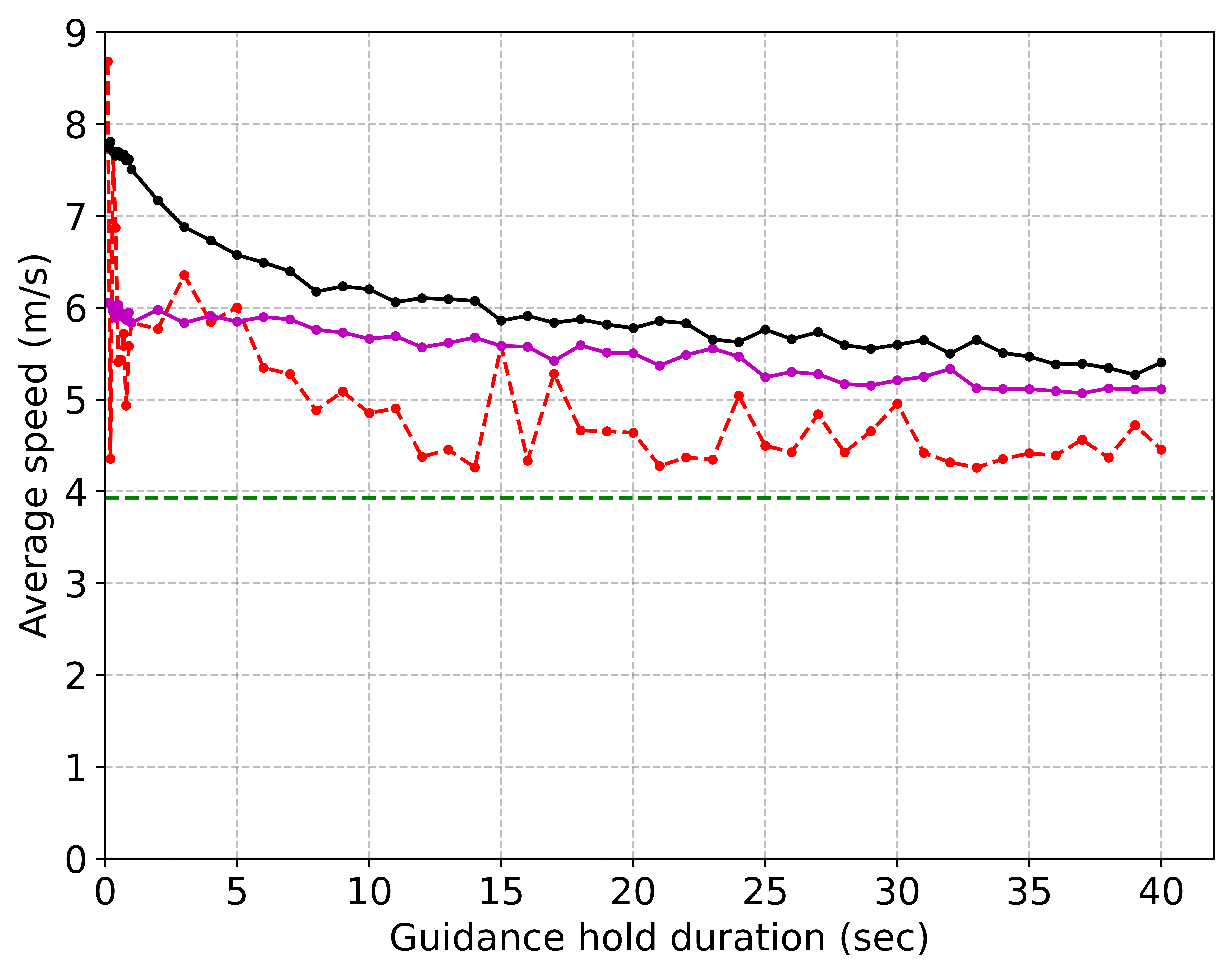}}
    \subfloat[Signalized intersection with speed guidance\label{fig:intersection-vel-transfered}]{%
        \includegraphics[height=4.7cm]{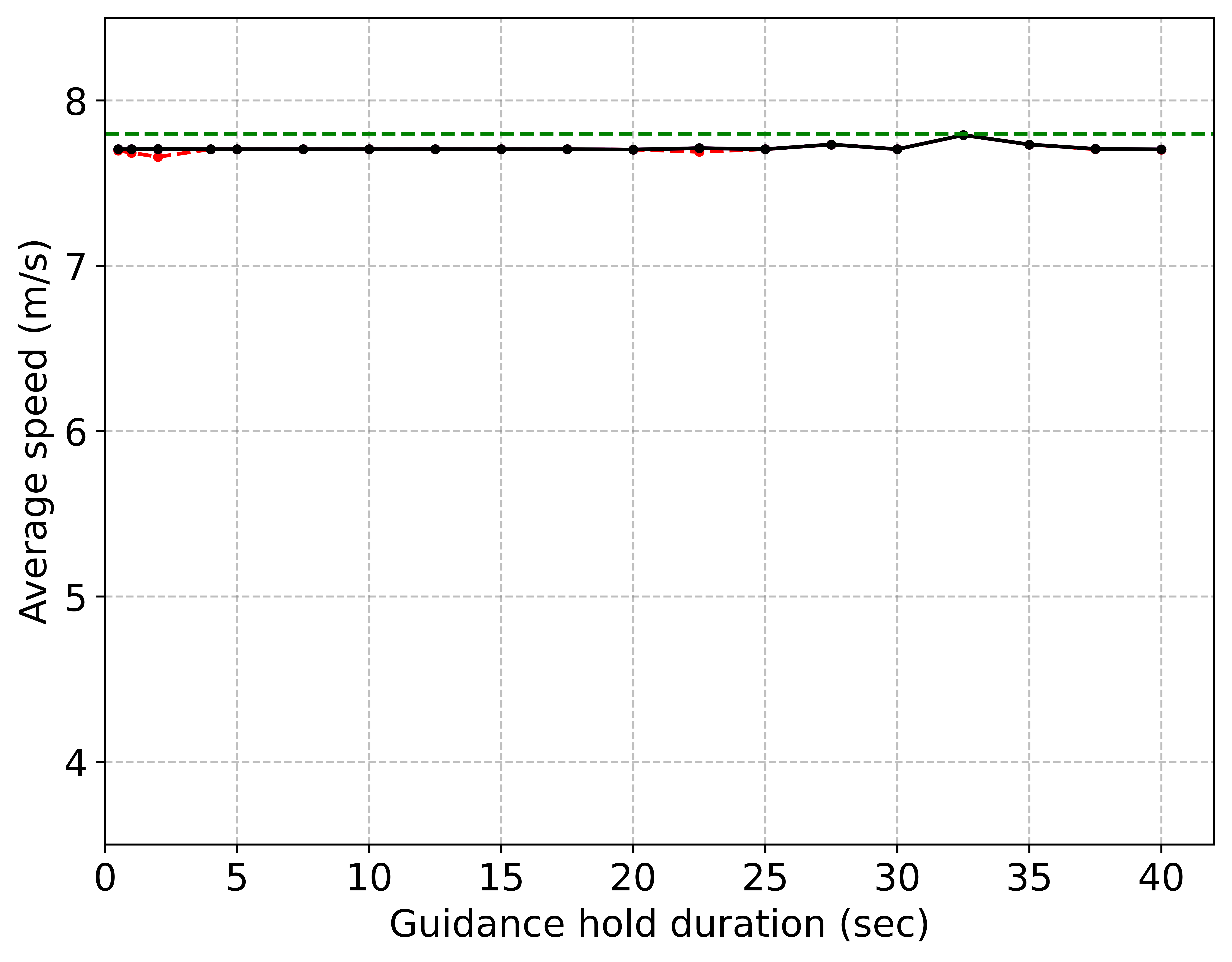}}
    \caption{System performance of Temporal Transfer Learning algorithms (GTTL and CTTL) compared to the exhaustive RL. \jhedit{The advisory system used either acceleration or speed guidance.}}
    \label{fig:three-transfered}
\end{figure*}

\begin{figure*}[!t]
    \centering
    \subfloat[Single lane ring with acceleration guidance\label{fig:ttl-ring-acc}]{
        \includegraphics[height=4.7cm]{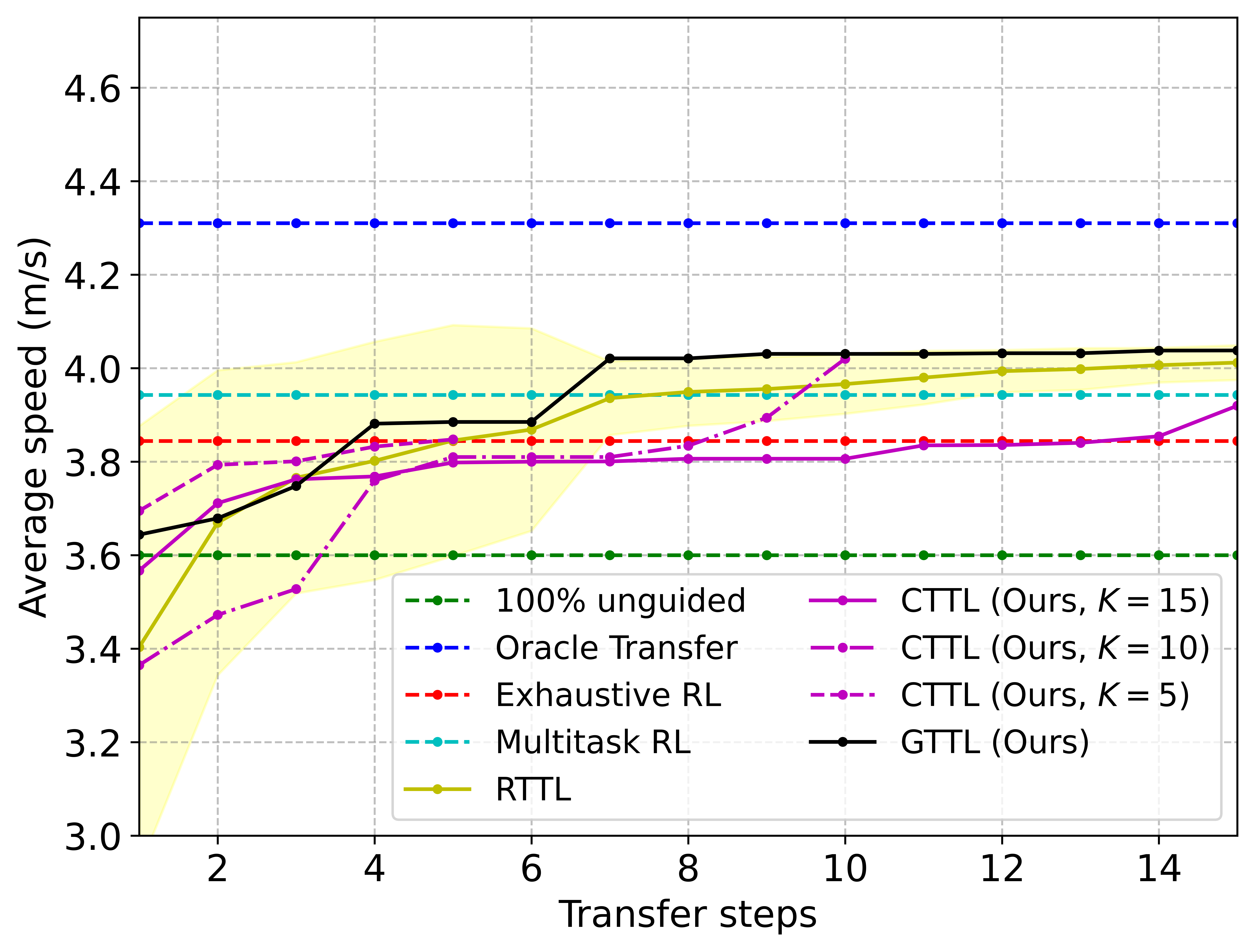}}
    \subfloat[Highway ramp with acceleration guidance\label{fig:ttl-ramp-acc}]{%
        \includegraphics[height=4.7cm]{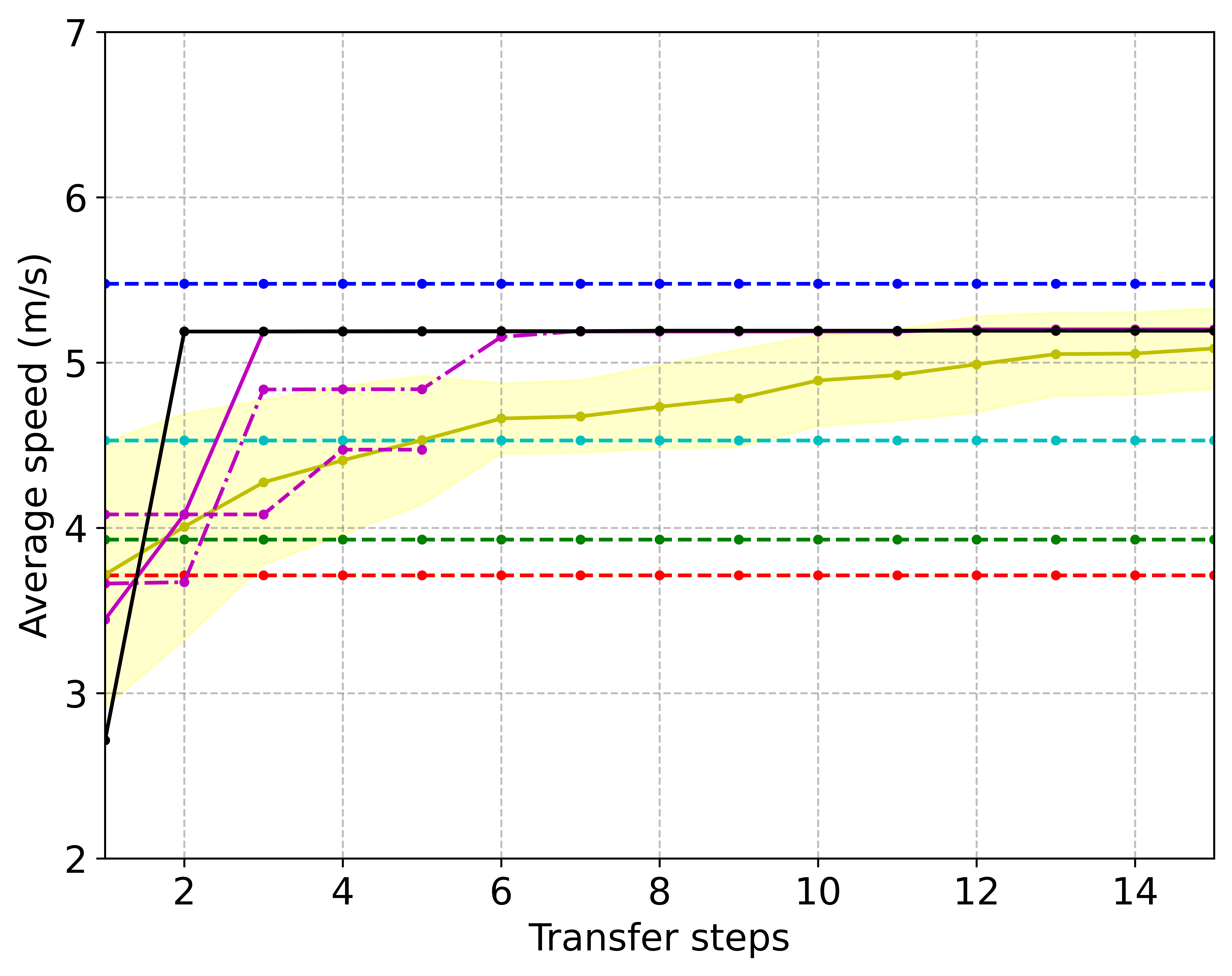}}
    \subfloat[Signalized intersection with acceleration guidance\label{fig:ttl-inter-acc}]{%
        \includegraphics[height=4.7cm]{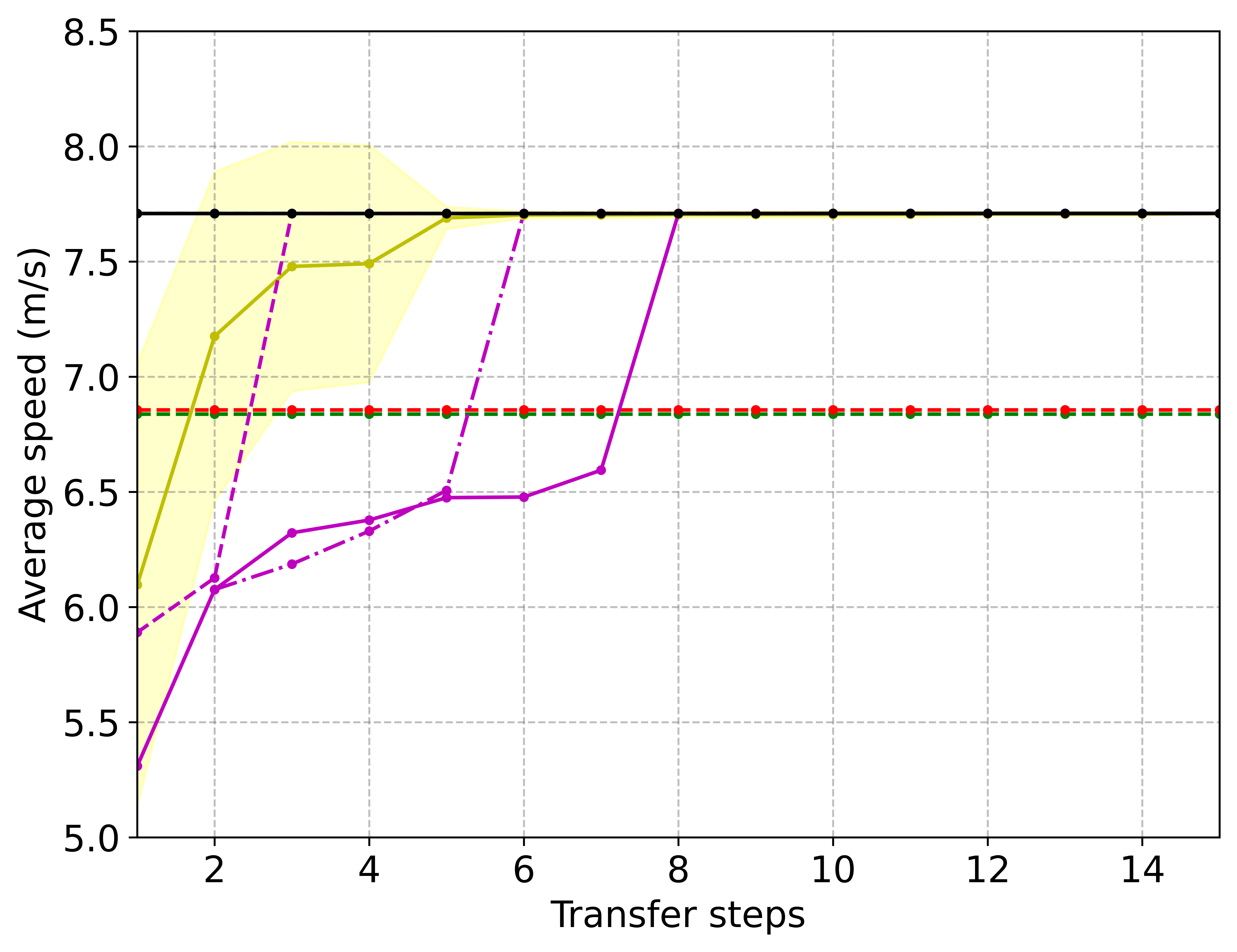}}
    \hfill
    \subfloat[Single lane ring with speed guidance\label{fig:ttl-ring-vel}]{
        \includegraphics[height=4.7cm]{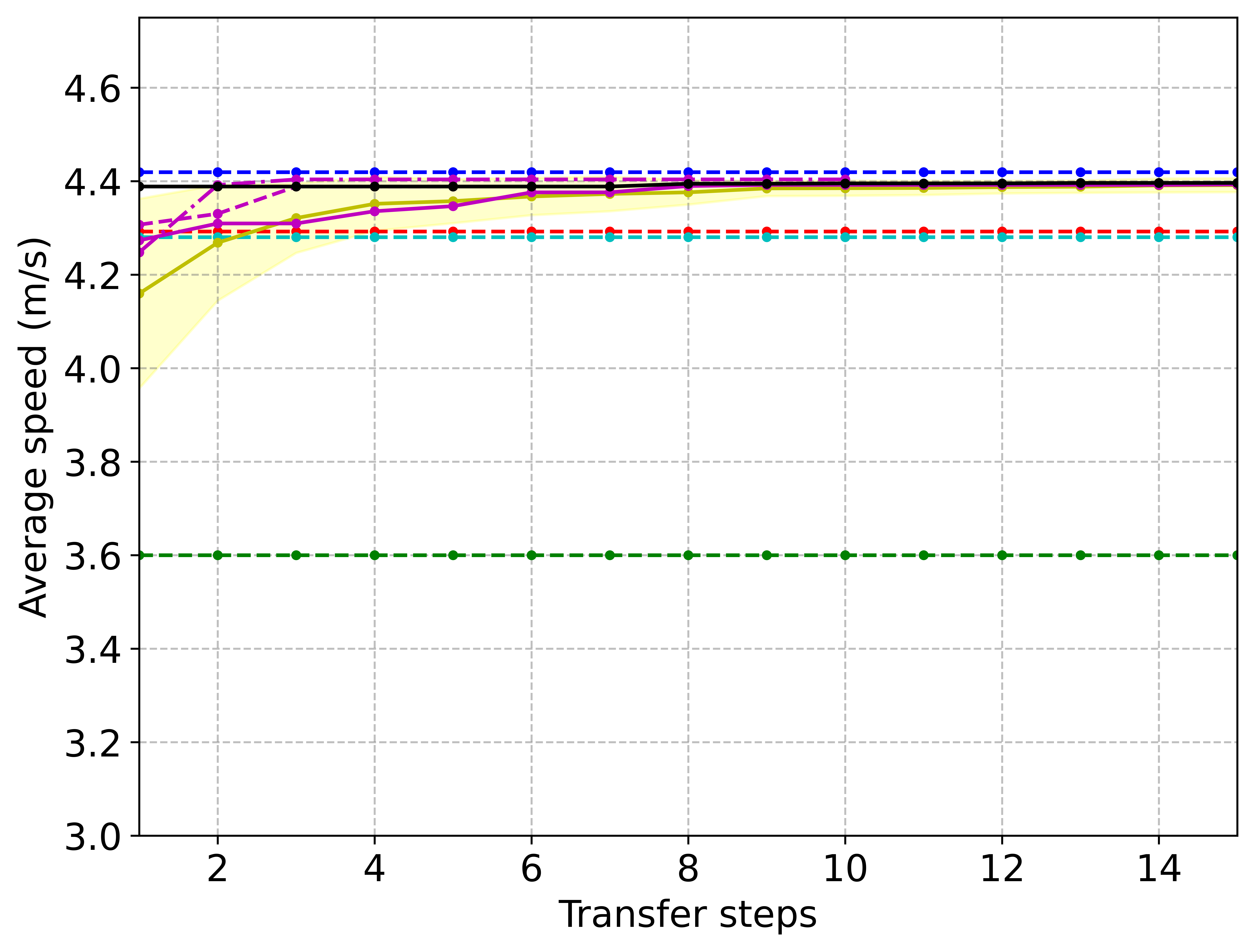}}
    \subfloat[Highway ramp with speed guidance\label{fig:ttl-ramp-vel}]{%
        \includegraphics[height=4.7cm]{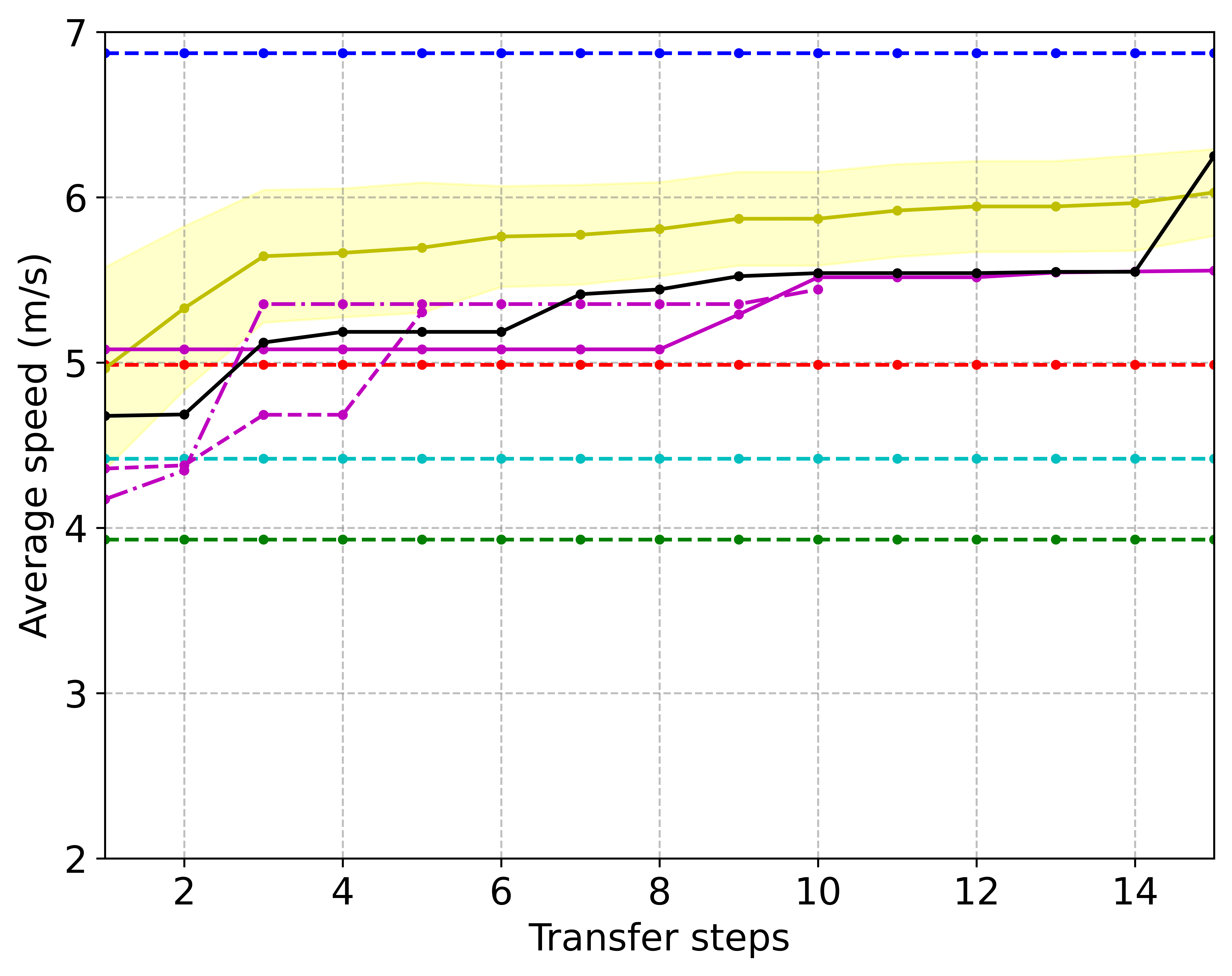}}
    \subfloat[Signalized intersection with speed guidance\label{fig:ttl-inter-vel}]{%
        \includegraphics[height=4.7cm]{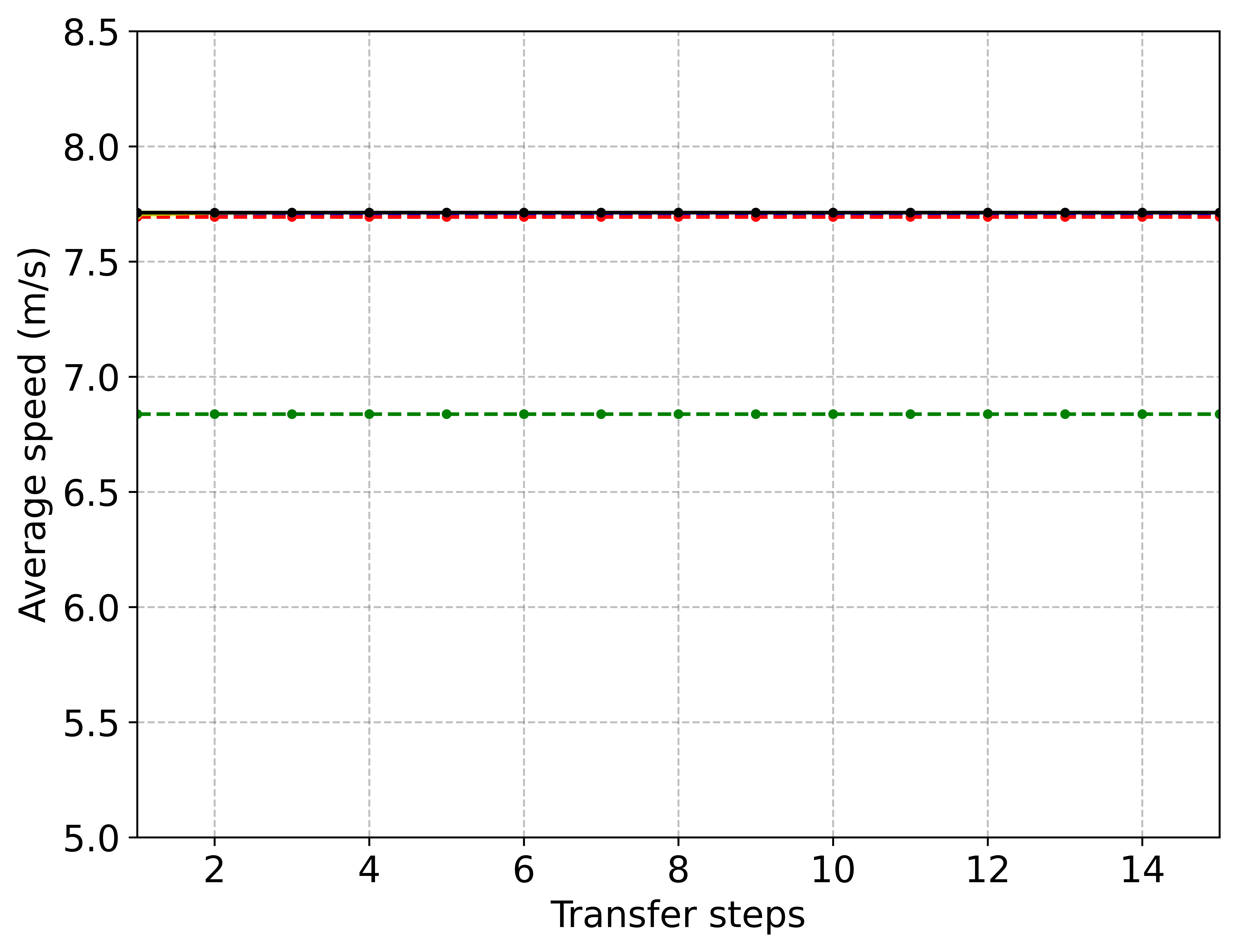}}
    \caption{
    System performance comparison of Temporal Transfer Learning (TTL) with various baselines in three different traffic scenarios and two different guidance types: unguided vehicles (representing a completely decentralized system), oracle transfer (showcasing zero-shot transfer capabilities), exhaustive RL (traditional separate model per task approach), multitask reinforcement learning (incorporating multiple tasks into the learning process), and transfer learning strategies such as greedy (choosing the 1-step greedy source training task) and coarse-to-fine (transferring from coarser to finer tasks progressively).
    }
    \label{fig:three-ttl}
\end{figure*}

\begin{table*}[!t]
\caption{Comparison of Average Speeds Achieved by Different Training Methods Across Various Traffic Scenarios and Guidance Types}
\label{table:simulation-results}
\centering
\begin{tabular}{llccrrrrrr}
\cline{5-10}
 &  & \multicolumn{1}{l}{}  & \multicolumn{1}{l}{} & \multicolumn{6}{c}{\textbf{Traffic Scenarios}}   \\ \cline{1-10}
\multicolumn{2}{c|}{} & \multicolumn{2}{c|}{} & \multicolumn{2}{c|}{Single-Lane Ring} & \multicolumn{2}{c|}{Highway Ramp} & \multicolumn{2}{c}{Signalized Intersection}   \\ \cline{1-10}
\multicolumn{2}{c|}{\textbf{Methods}} & \begin{tabular}[c]{@{}c@{}}Training\\ Complexity\end{tabular} & \multicolumn{1}{c|}{\begin{tabular}[c]{@{}c@{}}The number of \\ source tasks $k$\end{tabular}} & \multicolumn{1}{c}{\begin{tabular}[c]{@{}c@{}}Accel.\\ Guidance\end{tabular}} & \multicolumn{1}{c|}{\begin{tabular}[c]{@{}c@{}}Speed\\ Guidance\end{tabular}} & \multicolumn{1}{c}{\begin{tabular}[c]{@{}c@{}}Accel.\\ Guidance\end{tabular}} & \multicolumn{1}{c|}{\begin{tabular}[c]{@{}c@{}}Speed\\ Guidance\end{tabular}} & \multicolumn{1}{c}{\begin{tabular}[c]{@{}c@{}}Accel.\\ Guidance\end{tabular}} & \multicolumn{1}{c}{\begin{tabular}[c]{@{}c@{}}Speed\\ Guidance\end{tabular}}   \\ \cline{1-10}
\multicolumn{10}{l}{\textbf{Oracle}}  \\ \cline{1-10}
 & \multicolumn{1}{l|}{Oracle Transfer} & $n \times n$ & \multicolumn{1}{c|}{n} & 4.10 & \multicolumn{1}{r|}{4.41} & 5.48 & \multicolumn{1}{r|}{6.30} & 7.71 & 7.71   \\ \cline{2-10}
 & \multicolumn{1}{l|}{Exhaustive RL} & $n$ & \multicolumn{1}{c|}{-} & 3.84 & \multicolumn{1}{r|}{4.29} & 4.24 & \multicolumn{1}{r|}{4.99} & 6.86 & 7.69   \\ \cline{1-10}
\multicolumn{10}{l}{\textbf{Baselines}}   \\ \cline{1-10}
 & \multicolumn{1}{l|}{100\% Unguided} & 0 & \multicolumn{1}{c|}{-} & 3.80 & \multicolumn{1}{r|}{3.80} & 3.95 & \multicolumn{1}{r|}{3.95} & 6.84 & 6.84   \\ \cline{2-10}
 & \multicolumn{1}{l|}{Multitask RL} & $\jhedit{n} \dag$ & \multicolumn{1}{c|}{-} & 3.94 & \multicolumn{1}{r|}{4.28} & \jhedit{4.53} & \multicolumn{1}{r|}{\jhedit{4.42}} & \multicolumn{1}{c}{-} & \multicolumn{1}{c}{-}   \\ \cline{1-10}
\multicolumn{10}{l}{\textbf{Temporal Transfer Learning (Ours)}}  \\ \cline{1-10}
 & \multicolumn{1}{l|}{\multirow{3}{*}{\begin{tabular}[c]{@{}l@{}}Coarse-to-fine Temporal \\ Transfer Learning (CTTL)\end{tabular}}} & $k$ & \multicolumn{1}{c|}{5$\ddag$} & 3.85 & \multicolumn{1}{r|}{4.39} & 4.47 & \multicolumn{1}{r|}{5.31} & \textbf{7.71} & \textbf{7.71}   \\
 & \multicolumn{1}{l|}{} & $k$  & \multicolumn{1}{c|}{10$\ddag$} & 4.02 & \multicolumn{1}{r|}{\textbf{4.40}} & 5.19 & \multicolumn{1}{r|}{5.44} & \textbf{7.71} & \textbf{7.71}   \\
 & \multicolumn{1}{l|}{} & $k$  & \multicolumn{1}{c|}{15$\ddag$} & 3.92 & \multicolumn{1}{r|}{4.39} & \textbf{5.20} & \multicolumn{1}{r|}{5.56} & \textbf{7.71} & \textbf{7.71}   \\ \cline{2-10}
 & \multicolumn{1}{l|}{\multirow{3}{*}{\begin{tabular}[c]{@{}l@{}}Greedy Temporal\\ Transfer Learning (GTTL)\end{tabular}}} & $k$ & \multicolumn{1}{c|}{5} & 3.89 & \multicolumn{1}{r|}{4.39} & 5.19 & \multicolumn{1}{r|}{5.19} & \textbf{7.71} & \textbf{7.71}   \\
 & \multicolumn{1}{l|}{} & $k$ & \multicolumn{1}{c|}{10} & 4.03 & \multicolumn{1}{r|}{\textbf{4.40}} & 5.19 & \multicolumn{1}{r|}{5.54} & \textbf{7.71} & \textbf{7.71}   \\
 & \multicolumn{1}{l|}{} & $k$ & \multicolumn{1}{c|}{15} & \textbf{4.04} & \multicolumn{1}{r|}{\textbf{4.40}} & 5.19 & \multicolumn{1}{r|}{\textbf{6.25}} & \textbf{7.71} & \textbf{7.71}   \\ \cline{1-10}
\multicolumn{10}{l}{\textbf{Ablation}} \\ \cline{1-10}
 & \multicolumn{1}{l|}{\multirow{3}{*}{\begin{tabular}[c]{@{}l@{}}Random Temporal \\ Transfer Learning (RTTL)\end{tabular}}} & $k$ & \multicolumn{1}{c|}{5} & 3.85 & \multicolumn{1}{r|}{4.36} & 4.53 & \multicolumn{1}{r|}{5.70} & 7.69 & \textbf{7.71}   \\
 & \multicolumn{1}{l|}{} & $k$ & \multicolumn{1}{c|}{10} & 3.97 & \multicolumn{1}{r|}{4.39} & 4.89 & \multicolumn{1}{r|}{5.87} & 7.70 & \textbf{7.71}   \\
 & \multicolumn{1}{l|}{} & $k$ & \multicolumn{1}{c|}{15} & 4.01 & \multicolumn{1}{r|}{4.39} & 5.09 & \multicolumn{1}{r|}{6.03} & \textbf{7.71} & \textbf{7.71}   \\ \cline{1-10}
\end{tabular}
\\
\vspace{0.1in}
\small \textdagger: \jhedit{denotes the number of tasks used in multitask RL. It was evaluated after the same number of rollouts of training as other settings.} \\
\small \textdaggerdbl: \jhedit{The performance of the CTTL is assessed after completing the training across a predetermined number of source tasks at budget.}\\
\end{table*}

\jhedit{\Cref{table:simulation-results} and \Cref{fig:three-ttl} illustrate the outcomes of TTL algorithms compared to baselines in different traffic environments. Here, the selected performance metric is the average speed of all vehicles evaluated in all tasks. In scenarios such as the highway ramp, while outflow is the primary reward metric, we also present the average speed as a metric for comparison. This choice is justified by the high correlation between speed and outflow, providing a consistent measure across different scenarios for a more straightforward comparative analysis.}
\jhedit{The bolded values represent the highest performance metrics achieved by TTL algorithms, discounting the Oracle Transfer due to its prohibitive computational demand.}
The TTL algorithms exhibit exemplary performance in both acceleration and speed guidance categories, markedly outperforming the baselines across diverse traffic conditions. 
Remarkably, \jhedit{with few source tasks}, TTL algorithms approach the near-term performance of the oracle transfer. 
Some scenarios require a small number of source tasks to achieve the near-term performance of Oracle transfer, while others demand more extensive iterations. 
Specifically, in the signalized intersection scenario, all Transfer Learning methods yield the top performance when paired with speed guidance. 
\jhedit{This highlights the effectiveness of TTL in optimizing traffic management tasks, particularly when combined with speed guidance.}
\Cref{fig:three-transfered} shows the system-level performance of each task after the temporal transfer learning methods are applied, compared to the exhaustive RL.

The results presented in \Cref{fig:three-ttl} offer an insightful comparison of several training methodologies in the context of coarse-grained advisory autonomy tasks in different traffic scenarios. 
\jhedit{The performance metrics present in \Cref{fig:three-ttl} indicate the average of evaluated performance across the full range of the coarse-grained advisory, and each task is evaluated with the average speed of all vehicles, with higher values denoting better performance. \jhedit{It provides a gauge for the generalizability of source tasks selected from different methods across various target tasks.}}

\textbf{Single-Lane Ring. } 
\Cref{fig:ring-scratch} compares the system performance of acceleration and speed guidance in the single-lane ring road network when trained from scratch. 
When analyzing the results, both guidance types demonstrate excellent overall performance as the guidance hold duration increases, with an average speed increase of approximately 22.22\% for all vehicles in the system. 
However, it is worth noting that the acceleration guidance results were slightly lower than speed guidance.
First, in a single-lane ring environment (\Cref{fig:ttl-ring-acc}), the average speed of GTTL starts higher than both RTTL and CTTL in the first iteration of selecting the source task. 
Despite a slight decrease in the early stages, the speed improves consistently over the iterations and stays competitive against the other methods. 
The performance of GTTL shows that it learns quickly in the initial stages and then continues to optimize its performance in subsequent steps, indicating an effective transfer of knowledge.
It's also worth noting that while no strategy surpasses Oracle Transfer's average speed of $4.10$ m/s, GTTL gets relatively close, reaching final average speeds of approximately $4.04$ m/s. 
While the trends for RTTL are upward as the number of source tasks gets larger, it does not exceed the performance demonstrated by GTTL. 
Multitask RL, although slightly surpassing the baseline, falls short when compared to our GTTL method.

Furthermore, a clear distinction is observed when comparing the number of source tasks required to achieve a given performance level across methods. 
To surpass baselines with exhaustive RL, RTTL necessitates approximately ten steps, whereas GTTL achieves this in merely seven steps, highlighting its efficiency.
These findings strongly advocate the effectiveness of GTTL in such driving scenarios, reinforcing its potential suitability for real-world applications in achieving coarse-grained advisory in mixed autonomy.
Upon examining speed guidance results (\Cref{fig:ttl-ring-vel}), we observe that performance levels are already near-optimal even before applying transfer learning algorithms. 
This observation highlights the intrinsic effectiveness of speed guidance, making the added benefits derived from implementing TTL algorithms less distinguishable in this specific scenario.

\textbf{Highway Ramp. } 
Following the single-lane ring road scenario, we analyze the results from a highway ramp scenario, where the complexity of the traffic situations and interactions are significantly elevated.
\jhedit{Figure~\ref{fig:ramp-scratch} displays the jagged performance of training exhaustively in the highway ramp road network, which could indicate the difficulty of traffic coordination around the ramp and brittleness of RL training. 
However, the overall trend suggests that the average speed of all vehicles decreases as the guidance hold duration increases.}
\jhedit{In both scenarios, the multitask RL approach—trained with access to all target tasks—demonstrates a slight improvement over the unguided baseline but still falls significantly short of the performance offered by the TTL algorithms.}

With the acceleration guidance (\Cref{fig:ttl-ramp-acc}), Oracle Transfer, considered as upper-bound performance, consistently achieved $5.48$ m/s for speed guidance, while the average speed in the unguided case is maintained around $3.95$ m/s. 
CTTL progressively improved the average speed from $4.47$ m/s within the budget of 5 to $5.20$ m/s over 15 source tasks, obtaining the highest performance. 
GTTL started at $5.19$ m/s after the first five steps, which is the highest among other methods, and eventually optimized its performance to $5.19$ m/s across the 15 steps.
Switching to the speed guidance scenario (\Cref{fig:ttl-ramp-vel}), all methods indicated an enhancement compared to the acceleration guidance scenario. 
The RTTL method started at $5.70$ m/s and reached a higher peak speed of $6.03$ m/s. 
Furthermore, the CTTL method increased the average speed from $5.31$ m/s to $5.56$ m/s over the 15 steps. 
GTTL exhibited robustness, initiating at an average speed of $5.19$ m/s and advancing to $6.25$ m/s across the steps.
\jhedit{This result where RTTL outperforms GTTL and CTTL highlights the unpredictable aspects of RL training, indicating that despite GTTL's strategic framework, random task selection by RTTL can, at times, yield comparable or superior results.}

\textbf{Signalized Intersection. }
Analyzing the signalized intersection scenarios with acceleration and speed guidance reveals some notable trends.
When trained exhaustively, the system performance of the signalized intersection remains steady (\Cref{fig:intersection-scratch}). 
\jhedit{During the training of the multitask RL policy, achieving a robust policy that could generalize across all ranges of hold duration tasks turned out to be challenging. Once the policy converged, it unfortunately led to an increase in collisions at intersections.}

For acceleration guidance (\Cref{fig:ttl-inter-acc}), the unguided scenario and exhaustive RL resulted in average speeds of $6.84$ m/s and $6.86$ m/s, respectively, while Oracle Transfer reached $7.71$ m/s. 
TTL methods, particularly CTTL and GTTL, improved significantly, up to the performance of Oracle Transfer.
Speed guidance scenario (\Cref{fig:ttl-inter-vel}) benefits from the transfer learning procedures. 
Both CTTL and GTTL mirrored Oracle Transfer performance, achieving average speeds of around $7.71$ m/s, almost close to the optimal performance. 
\jhedit{We observe that at signalized intersections with speed guidance, even a single source task allows transfer learning methods to match the performance of the oracle. This may be due to the more predictable nature of traffic dynamics at intersections, making them amenable to successful transfer with minimal learning.}
\jhedit{Moreover, the selection of the $K$ parameter for CTTL should be strategically determined by balancing the computational budget and the complexity of the task to optimize the algorithm's performance.}

\subsection{\jhedit{Sensitivity Analysis}} \label{sec:sensitivity-analysis}
\begin{figure}[!t]
    \centering
    \subfloat[Guidance hold duration 2 seconds\label{fig:random-idm-2}]{%
        \includegraphics[width=0.43\textwidth]{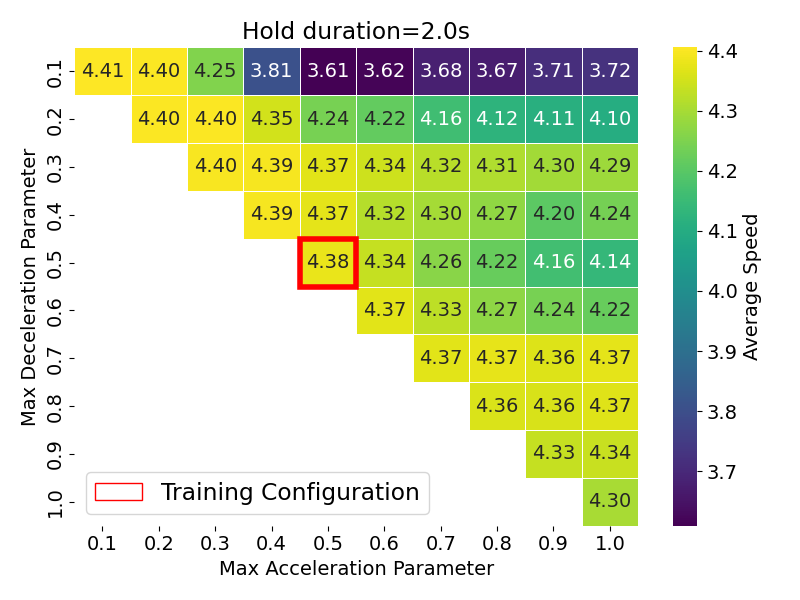}}
    \hfill
    \subfloat[Guidance hold duration 10 seconds\label{fig:random-idm-10}]{%
        \includegraphics[width=0.43\textwidth]{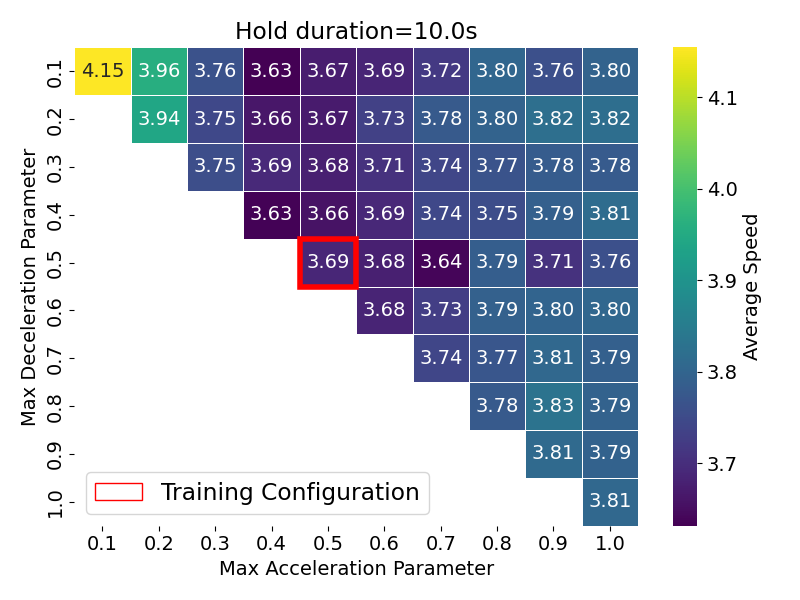}}
    \caption{\textbf{Robustness to Different IDM Parameters.} Heatmaps of average speed (m/s) under different combinations of max-acceleration (x-axis) and max-deceleration (y-axis) parameters. Warmer colors represent higher speeds. Both short (2\,s) and longer (10\,s) hold durations achieve stable performance across diverse driver profiles.}
    \label{fig:random-idm}
\end{figure}

\jhedit{While the IDM is a commonly used car-following model in mixed-autonomy research, it does not capture every nuance of real-world human driving. To evaluate how our learned policies perform under more diverse and less “ideal” driver behaviors, we conducted two complementary sensitivity analyses: (1)~randomizing key IDM parameters, and (2)~inducing abrupt braking events in a subset of human-driven vehicles.}

\jhedit{\textbf{Randomized IDM Parameters.} We first examine how variations in the maximum acceleration and deceleration parameters affect overall system performance. \Cref{fig:random-idm} demonstrate that the learned policies using TTL generalize well beyond a single set of IDM parameters to a broad range of aggressive and conservative driver profiles, both in short and long guidance hold duration.}

\begin{figure}[!t]
    \centering
    \includegraphics[width=0.8\linewidth]{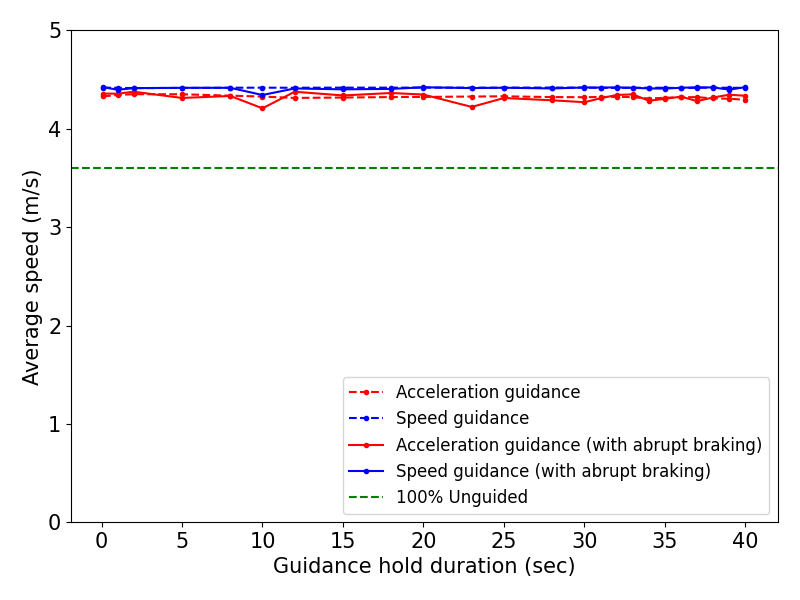}
    \caption{\textbf{Resilience from abrupt braking.} Comparison of our acceleration- and speed-guidance strategies trained with TTL and ``abrupt braking'' in a single-lane ring scenario, along with a 100\% unguided baseline (green dashed line). Coarse-grained guidance maintains a higher average speed even under abrupt driver actions.}
    \label{fig:ring-kick}
\end{figure}

\jhedit{\textbf{Abrupt Braking Events.} In addition to IDM parameter variations, we also tested the resilience of our policy against unexpected driver actions by injecting ``abrupt braking” behaviors. Specifically, a few human-driven vehicles were programmed to perform sudden stops with low probability, disrupting the smooth car-following patterns. \Cref{fig:ring-kick} shows how our acceleration- and speed-guidance policies sustain high average speed in the single-lane ring scenario despite these abrupt disruptions, underscoring the policy’s ability to handle erratic driver maneuvers.}

\jhedit{Overall, these experiments confirm that our learned guidance strategies remain effective under both parameter variations and abrupt disruptions, underscoring their robustness and potential for real-world mixed-autonomy applications.}

\section{Conclusion}
\jhedit{This paper presents temporal transfer learning (TTL) algorithms for coarse-grained advisory, addressing the intrinsic complexity and brittleness of RL algorithms. Through empirical analysis across three traffic scenarios, we evaluate the performance of the advisory system through either acceleration or speed and see how much performance is near-term as instantaneous control. Significant findings show that TTL outperforms baselines in diverse traffic conditions, indicating its potential to improve traffic management in mixed-autonomy environments and reduce computational cost for training RL policies. TTL algorithm is also generic and applicable to different contextual MDP tasks and can find meaningful cross-domain applications especially in industrial automation and robotics. Relaxing our initial assumptions, such as the constant estimation of training performance, could enhance our algorithm's applicability and effectiveness. Future work will explore more complex scenarios and relax the theoretical assumptions to bridge the gap towards real-world applicability. We also underscore the potential into computationally efficient continual learning and fine-tuning strategies for transfer learning, which could offer a viable way for improved traffic management system.
}

\appendices

\section{Proof for Theorem~\ref{theorem:1-step-greedy-for-single-step} \label{appendix:proof-ttl}}

\begin{proof}
\begin{figure}[!t]
    \centering
    \subfloat[$J_k(\delta)$ is symmetric\label{fig:J-symmetric}]{
        \includegraphics[height=3.5cm]{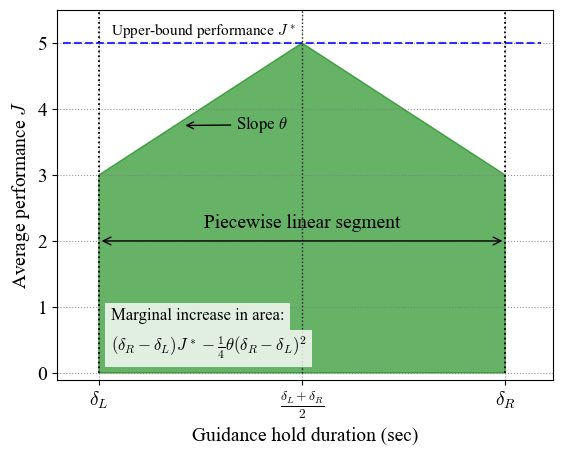}
        \includegraphics[height=3.5cm]{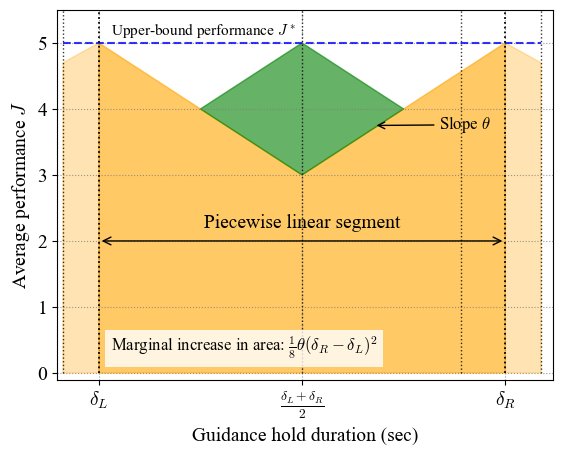}}
    \hfill
    \subfloat[$J_k(\delta)$ is asymmetric\label{fig:J-asymmetric}]{
        \includegraphics[height=3.5cm]{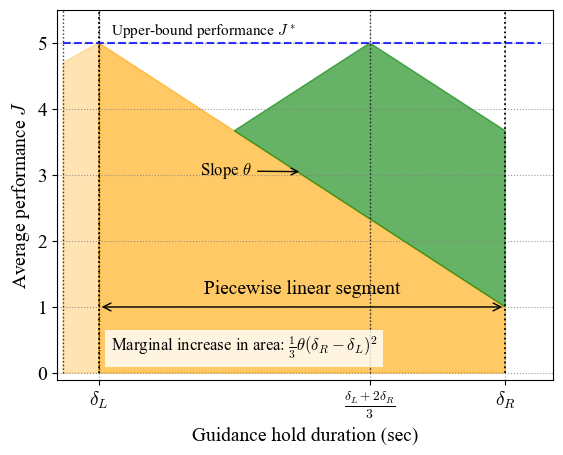}
        \includegraphics[height=3.5cm]{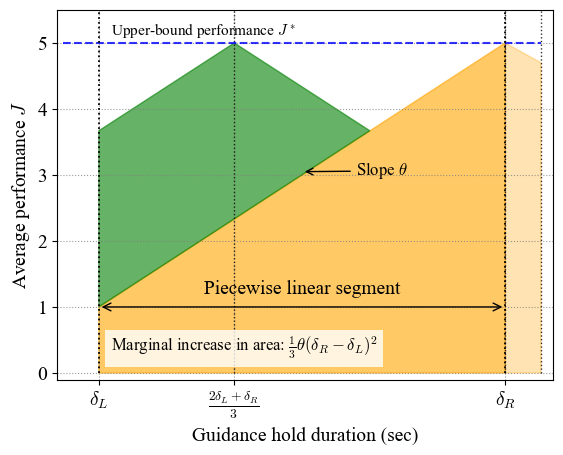}}
    \caption{Decision strategies and corresponding marginal performance increases for symmetric and asymmetric $J_k(\delta)$. (a) When $J_k(\delta)$ is symmetric about the center, the optimal $\delta_k$ is the midpoint, maximizing the area under $J_k$. (b) For asymmetric $J_k(\delta)$, the trisection points yield the optimum depending on the local slope. The green region indicates the marginal gain from the current choice, contrasting with prior coverage (orange).}
    \label{fig:illust-sym-asym}
\end{figure}

\noindent
\Cref{fig:illust-sym-asym} illustrates the one-step decision for selecting the next hold duration $\delta_k$ to maximize the marginal performance gain within a segment $[\delta_L,\delta_R]$. Let $\theta_L,\theta_R>0$ denote the local (piecewise-linear) slopes near the boundaries. We consider three cases.

\paragraph*{Case 1: Symmetric $J_k(\delta)$}
Define the objective as the sum of the two trapezoidal areas created by a split at $\delta'\in[\delta_L,\delta_R]$:
\begin{align*}
    \max_{\delta'} \text{Obj}(\delta')
    &=\tfrac{1}{2}\,(\delta'-\delta_L)\!\left[2J^*-\theta_L(\delta'-\delta_L)\right] \\
    &\quad + \tfrac{1}{2}\,(\delta_R-\delta')\!\left[2J^*-\theta_R(\delta_R-\delta')\right] \\
    &= (\delta_R-\delta_L)J^* - \tfrac{\theta_L}{2}(\delta'-\delta_L)^2 - \tfrac{\theta_R}{2}(\delta_R-\delta')^2 .
\end{align*}
Setting $\frac{\mathrm{d}}{\mathrm{d}\delta'}\text{Obj}(\delta')=0$ gives
\[
    -\theta_L(\delta'-\delta_L)+\theta_R(\delta_R-\delta')=0
    \quad\Rightarrow\quad
    \delta'=\frac{\theta_L\delta_L+\theta_R\delta_R}{\theta_L+\theta_R}.
\]
Under \cref{assume:same-transfer-slope} (\(\theta_L=\theta_R=\theta\)), the optimizer is the midpoint
\(
\delta'=\frac{\delta_L+\delta_R}{2}.
\)
The resulting marginal gain is
\(
\Delta A_k=\frac{1}{8}\,\theta\,(\delta_R-\delta_L)^2.
\)

\paragraph*{Case 2: Positive slope}
For a positively sloped $J_k(\delta)$ across the segment, the objective reduces to
\[
\text{Obj}(\delta')=\tfrac{1}{2}\theta\,(\delta_R+3\delta'-2\delta_L)\,(\delta_R-\delta').
\]
Differentiating and setting to zero yields
\(
\delta'=\tfrac{2\delta_L+\delta_R}{3},
\)
and the marginal area increase is
\(
\Delta A_k=\tfrac{1}{3}\,\theta\,(\delta_R-\delta_L)^2.
\)

\paragraph*{Case 3: Negative slope}
By symmetry, for a negative slope the maximizer is
\(
\delta'=\tfrac{\delta_L+2\delta_R}{3},
\)
with the same marginal gain
\(
\Delta A_k=\tfrac{1}{3}\,\theta\,(\delta_R-\delta_L)^2.
\)

\noindent
These three cases establish the one-step greedy choice.
\end{proof}

\section{Proof for Theorem~\ref{theorem:area-fill-ghost} \label{appendix:proof-area-fill-ghost}}
\begin{proof}
For the initial step,
\[
\tilde{A}_1=(\delta_{\max}-\delta_{\min})J^*-\tfrac{1}{4}\theta(\delta_{\max}-\delta_{\min})^2.
\]
From the geometry in \cref{fig:area-fill-lower}, the next few odd steps satisfy
\begin{align*}
\tilde{A}_3&=(\delta_{\max}-\delta_{\min})J^*-\tfrac{1}{8}\theta(\delta_{\max}-\delta_{\min})^2,\\
\tilde{A}_5&=(\delta_{\max}-\delta_{\min})J^*-\tfrac{1}{16}\theta(\delta_{\max}-\delta_{\min})^2,\\
\tilde{A}_9&=(\delta_{\max}-\delta_{\min})J^*-\tfrac{1}{32}\theta(\delta_{\max}-\delta_{\min})^2, \;\ldots
\end{align*}
In general, for $k=2^i+1$ with $i\in\bN$ and using \cref{assume:upperbound-J}, i.e., $J^*=\theta(\delta_{\max}-\delta_{\min})$,
\begin{align*}
\tilde{A}_{2^i+1}
&=(\delta_{\max}-\delta_{\min})J^*-\tfrac{1}{2^{i+2}}\theta(\delta_{\max}-\delta_{\min})^2 \\
&=\Bigl(1-\tfrac{1}{2^{i+2}}\Bigr)\theta(\delta_{\max}-\delta_{\min})^2.
\end{align*}
To cover at least a fraction $(1-\varepsilon)$ of the full area,
\[
\tilde{A}_{2^i+1}\ge (1-\varepsilon)\,\theta(\delta_{\max}-\delta_{\min})^2
\;\;\Longleftrightarrow\;\;
\varepsilon \ge \tfrac{1}{2^{i+2}}.
\]
Hence,
\[
2^i+1 \;\ge\; \frac{1}{4\varepsilon}+1 \;=\; \frac{4\varepsilon+1}{4\varepsilon},
\]
so at least $\frac{4\varepsilon+1}{4\varepsilon}$ steps are required.
\end{proof}

\section{Proof for Theorem~\ref{lemma:optimality-ctl} \label{appendix:proof-ctl-optimality}}

\begin{proof}
Consider a transfer budget of $K$ and partition $[\delta_{\min},\delta_{\max}]$ into $K{+}1$ subsegments with lengths $l_1,\ldots,l_{K+1}$ (so $\sum_{k=1}^{K+1} l_k=\delta_{\max}-\delta_{\min}$). The remaining area below $J^*$ after the coarse-to-fine temporal transfer (CTTL) sequence equals the sum of triangle areas within each subsegment:
\[
\textstyle
\frac{1}{2}\theta l_1^2,\quad
\frac{1}{4}\theta l_k^2\;(k=2,\ldots,K),\quad
\frac{1}{2}\theta l_{K+1}^2.
\]
Thus we minimize
\begin{align*}
\min_{l_k\ge 0}\quad
& \tfrac{1}{2}\theta l_1^2 + \tfrac{1}{4}\theta l_2^2 + \cdots + \tfrac{1}{4}\theta l_K^2 + \tfrac{1}{2}\theta l_{K+1}^2 \\
\text{s.t.}\quad
& \sum_{k=1}^{K+1} l_k = \delta_{\max}-\delta_{\min}.
\end{align*}
Let
\(
\mathbf{u}=(l_1/\sqrt{2},\,l_2/2,\,\ldots,\,l_K/2,\,l_{K+1}/\sqrt{2})
\)
and
\(
\mathbf{v}=(\sqrt{2},\,2,\,\ldots,\,2,\,\sqrt{2})
\).
Then
\(
\langle \mathbf{u},\mathbf{v}\rangle=\sum_{k=1}^{K+1} l_k=\delta_{\max}-\delta_{\min}
\)
and
\(
\|\mathbf{v}\|^2=4K
\).
By Cauchy–Schwarz,
\[
\theta\|\mathbf{u}\|^2 \;\ge\; \theta \frac{\langle \mathbf{u},\mathbf{v}\rangle^2}{\|\mathbf{v}\|^2}
= \frac{\theta}{4K}(\delta_{\max}-\delta_{\min})^2 .
\]
Equality holds when $\mathbf{u}=\lambda\mathbf{v}$, i.e.,
\(
2l_1=l_2=\cdots=l_K=2l_{K+1}.
\)
Using the sum constraint,
\[
l_1=l_{K+1}=\frac{\delta_{\max}-\delta_{\min}}{2K},
\qquad
l_2=\cdots=l_K=\frac{\delta_{\max}-\delta_{\min}}{K}.
\]
Therefore, the optimal objective value is
\[
A_K^{\text{CTTL}}=\frac{\theta}{4K}(\delta_{\max}-\delta_{\min})^2,
\]
which proves the optimality of the CTTL allocation.
\end{proof}

\section{Proof for Theorem~\ref{theorem:suboptimality-ttl} \label{appendix:proof-ttl-suboptimality}}

\begin{proof}
We consider the cases $K=2^i{+}1$ and $2^{i-1}{+}1<K<2^i{+}1$ with $i\in\bN$.

\paragraph*{Case 1: $K=2^i{+}1$}
From Appendix~\ref{appendix:proof-area-fill-ghost},
\begin{align*}
    A^{\text{CTTL}}_K&= \Bigl(1-\tfrac{1}{4K}\Bigr)\theta(\delta_{\max}-\delta_{\min})^2,\\
    A^{\text{GTTL}}_K &\ge \tilde{A}^{\text{GTTL}}_K = \Bigl(1-\tfrac{1}{4(K-1)}\Bigr)\theta(\delta_{\max}-\delta_{\min})^2.
\end{align*}
Hence,
\[
A^{\text{CTTL}}_K - A^{\text{GTTL}}_K
\le \frac{1}{4K(K-1)}\,\theta(\delta_{\max}-\delta_{\min})^2 .
\]

\paragraph*{Case 2: $2^{i-1}{+}1<K<2^i{+}1$}
Using
\(
\tilde{A}^{\text{GTTL}}_{2^i+1}=(1-\tfrac{1}{2^{i+2}})\theta(\cdot)^2
\)
and
\(
\tilde{A}^{\text{GTTL}}_{2^{i-1}+1}=(1-\tfrac{1}{2^{i+1}})\theta(\cdot)^2
\),
the uniform increments over this range are
\[
\tilde{A}^{\text{GTTL}}_{K+1}-\tilde{A}^{\text{GTTL}}_K
= \frac{1}{2^{2i+1}}\,\theta(\delta_{\max}-\delta_{\min})^2.
\]
Therefore,
\begin{align*}
\tilde{A}^{\text{GTTL}}_K
&= \tilde{A}^{\text{GTTL}}_{2^{i-1}+1}
+ \frac{K-(2^{i-1}+1)}{2^{2i+1}}\theta(\delta_{\max}-\delta_{\min})^2 \\
&= \Bigl(1 + \frac{K-3\cdot 2^{i-1}-1}{2^{2i+1}}\Bigr)\theta(\delta_{\max}-\delta_{\min})^2.
\end{align*}
Then
\begin{align*}
A^{\text{CTTL}}_K - A^{\text{GTTL}}_K
&\le A^{\text{CTTL}}_K - \tilde{A}^{\text{GTTL}}_K\\
&\le \frac{1}{2^{2i+1}}\,\theta(\delta_{\max}-\delta_{\min})^2\\
&\le \frac{1}{2(K-1)^2}\,\theta(\delta_{\max}-\delta_{\min})^2,
\end{align*}
which yields the claimed bound.
\end{proof}

\section{Experimental details for modular road network \label{appendix:exp-modular-road-network}}
\noindent In mixed autonomy roadway settings, we investigate the traffic scenarios covered in the previous works \cite{yan_reinforcement_2021, wu_flow_2022, yan_unified_2022}, including the following road networks: single-lane ring, highway ramp, and signalized intersection. 

\paragraph{Single-lane Ring}
Inspired by Sugiyama \cite{sugiyama_traffic_2008}, the circular ring has a circumference of \(250\)\,m.
The single-lane ring environment aims to increase the average velocity of all vehicles in the road network.

The reward function is the average speed of all vehicles.
\begin{equation}
    r(s,a) = \frac{1}{n}\sum_{\forall i} v_i(s,a)
    \label{eqn:reward_ring}
\end{equation}

\paragraph{Highway Ramp}
The objective in the highway ramp environment was to increase the outflow given the same inflow. 
\jhedit{The reward function in the highway ramp scenario focuses on traffic flow efficiency. It is defined as the number of vehicles exiting the system.}
\jhedit{Specifically, we compare the average speed of all vehicles in the system as a performance measure.}

\paragraph{Signalized Intersection}
We have designed a single-lane, 4-way signalized intersection regulated by a static traffic signal phase. A multi-tasking training approach is employed to train this intersection. Specifically, we use a multi-task reinforcement learning (RL) strategy, considering various penetration rates to simulate different levels of human-guided vehicle presence. Nonetheless, when evaluating the effectiveness of this strategy, we focus on scenarios with a 0.1 penetration rate. This allows us to assess the performance of the trained RL policy in conditions where only 10\% of the vehicles are controlled by the RL policy, and the remaining 90\% operate under human \jhedit{driving model}.
\jhedit{
    For the signalized intersection, the reward function incorporates multiple components to optimize various aspects of traffic flow:

    \begin{equation}
    r(s,a) = \frac{1}{n}\sum_{\forall i} v_i(s,a) - \alpha_1 \cdot P_{\text{stop}} - \alpha_2 \cdot A_{\text{accel}} - \alpha_3 \cdot F_{\text{consumption}}
    \label{eqn:reward_inter}
    \end{equation}
    
    where:
    \begin{itemize}
        \item \( P_{\text{stop}} \) is a penalty for vehicles stopped or moving slower than a set threshold (typically 1 m/s), aimed at reducing delays.
        \item \( A_{\text{accel}} \) penalizes abrupt acceleration or deceleration to encourage smoother driving.
        \item \( F_{\text{consumption}} \) penalizes high fuel consumption, promoting eco-friendly driving practices.
    \end{itemize}

    The weights \( \alpha_1, \alpha_2, \alpha_3 \) are hyperparameters that fine-tune the significance of each penalty within the reward function, allowing for a balanced consideration of speed, safety, and efficiency based on the specific goals of each scenario.
    }
\jhedit{Similar to the highway ramp scenario, we compare the average speed of all vehicles in the system as a performance measure.}

\jhedit{\Cref{table:experiment-setup} provides the detailed experimental setup for RL, microscopic traffic simulation, car-following models, and the traffic scenarios used in the experiments.}

\begin{table*}[!t]
\centering
\footnotesize
\caption{\jhedit{Experimental Parameters for Reinforcement Learning, Temporal Transfer Learning, Simulations, Car following models, and Scenarios.}}
\label{table:experiment-setup}
\begin{tabular}{ l|l|l } 
 \hline
 \textbf{Type}     &\textbf{Experiment parameters}     & \textbf{Value}   \\
 \hline
 Reinforcement&Training epochs         & 1,000       \\ 
 Learning&Discount factor         & 0.999       \\ 
 &Test epochs         & 50       \\ 
 &Number of discrete action space             & 10             \\
 \hline
 Simulation&Simulation step         & 0.1 s/step       \\ 
 &Warmup steps             & 500 sec             \\
 &Timestep horizon             & 1,000 sec             \\
 \hline
 \jhedit{Car Following Model}&\jhedit{Model}         & \jhedit{Intelligent Driver Model \cite{treiber_congested_2000}}       \\ 
 &\jhedit{Maximum acceleration}             & \jhedit{1 {m/s$^2$}}             \\
 &\jhedit{Comfortable deceleration}              & \jhedit{1.5 {m/s$^2$}}             \\
 &\jhedit{Desired velocity}              & \jhedit{30 {m/s}}             \\
 &\jhedit{Minimum spacing}              & \jhedit{2 {m}}             \\
 &\jhedit{Desired time headway}              & \jhedit{1 {sec}}             \\
 &\jhedit{Exponent}              & \jhedit{4}             \\
 \hline
 Ring&Circumference         & 250 {m}       \\ 
 &Number of controlled vehicles         & 1       \\ 
 &Total number of vehicles         & 22       \\ 
 &Speed limit         & 10 {m/s}       \\ 
 \hline
 Highway ramp&Mainlane inflow rate         & 2,000 {veh/hr}       \\ 
 &On-ramp inflow rate         & 300 {veh/hr}       \\ 
 &Guided vehicle penetration rate         & 0.1       \\ 
 &Speed limit         & 30 {m/s}       \\ 
 \hline
 Signalized intersection&Inflow rate         & 400 {veh/hr}       \\ 
 &Signal phase time (green, red)        & 35, 45 {sec}       \\ 
 &Guided vehicle penetration rate         & 0.1       \\ 
 &Speed limit         & 14 {m/s}       \\ 
 &\jhedit{Weight for stop penalty}         & \jhedit{35}       \\ 
 &\jhedit{Weight for acceleration}         & \jhedit{1}       \\ 
 &\jhedit{Weight for fuel consumption}         & \jhedit{1}       \\ 
 \hline
 Temporal Transfer Learning&Transfer budget         & 15       \\ 
 \hline
\end{tabular}
\end{table*}

\section*{Acknowledgments}
\noindent The authors acknowledge the MIT SuperCloud and Lincoln Laboratory Supercomputing Center for providing HPC resources that have contributed to the research results reported within this paper. 
This work was partially supported by the MIT Energy Initiative (MITEI) Mobility Systems Center, the Kwanjeong scholarship, the National Science Foundation (NSF) under grant number 2149548, the NSF CAREER award (2239566), and the MIT Amazon Science Hub.
The authors would like to thank Katie Driggs-Campbell for the insightful discussion about advisory autonomy.

\bibliographystyle{IEEEtran}
\bibliography{reference}

\begin{IEEEbiography}[{\includegraphics[width=1in, height=1.25in, clip, keepaspectratio]{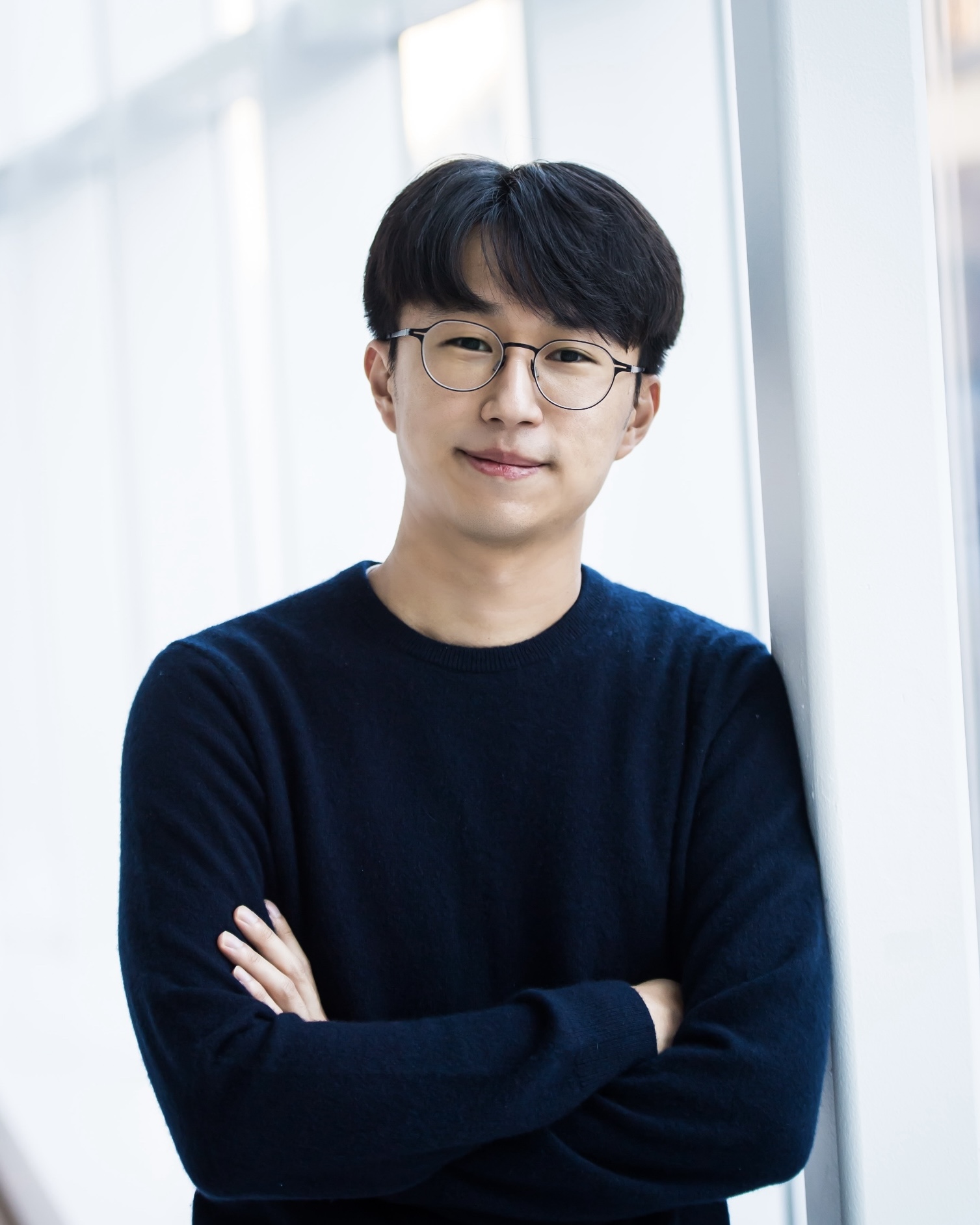}}]{Jung-Hoon Cho}
(Graduate Student Member, IEEE)
received the B.S. and M.S. degrees in CEE from Seoul National University. He is currently pursuing the Ph.D. degree with the Department of Civil and Environmental Engineering and the Laboratory for Information and Decision Systems, Massachusetts Institute of Technology (MIT). His research interests include the convergence of transportation systems and machine learning.
This involves delving into smart infrastructure, such as autonomous driving systems, traffic control in scenarios of mixed autonomy, and urban data intelligence through the application of machine learning.
\end{IEEEbiography}

\begin{IEEEbiography}[{\includegraphics[width=1in, height=1.25in, clip, keepaspectratio]{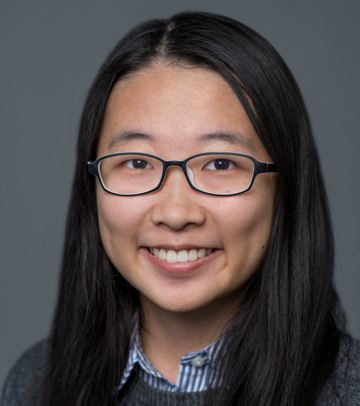}}]{Sirui Li} (Member, IEEE) received the B.S. degree in computer science and mathematics from Washington University, St. Louis, MO, USA, in 2019, and the Ph.D. degree in Social and Engineering System and Statistics from Massachusetts Institute of Technology, Cambridge, MA, USA, in 2025. Her research interests include areas of machine learning for combinatorial optimization and control analysis for transportation systems.
\end{IEEEbiography}

\begin{IEEEbiography}[{\includegraphics[width=1in, height=1.25in, clip, keepaspectratio]{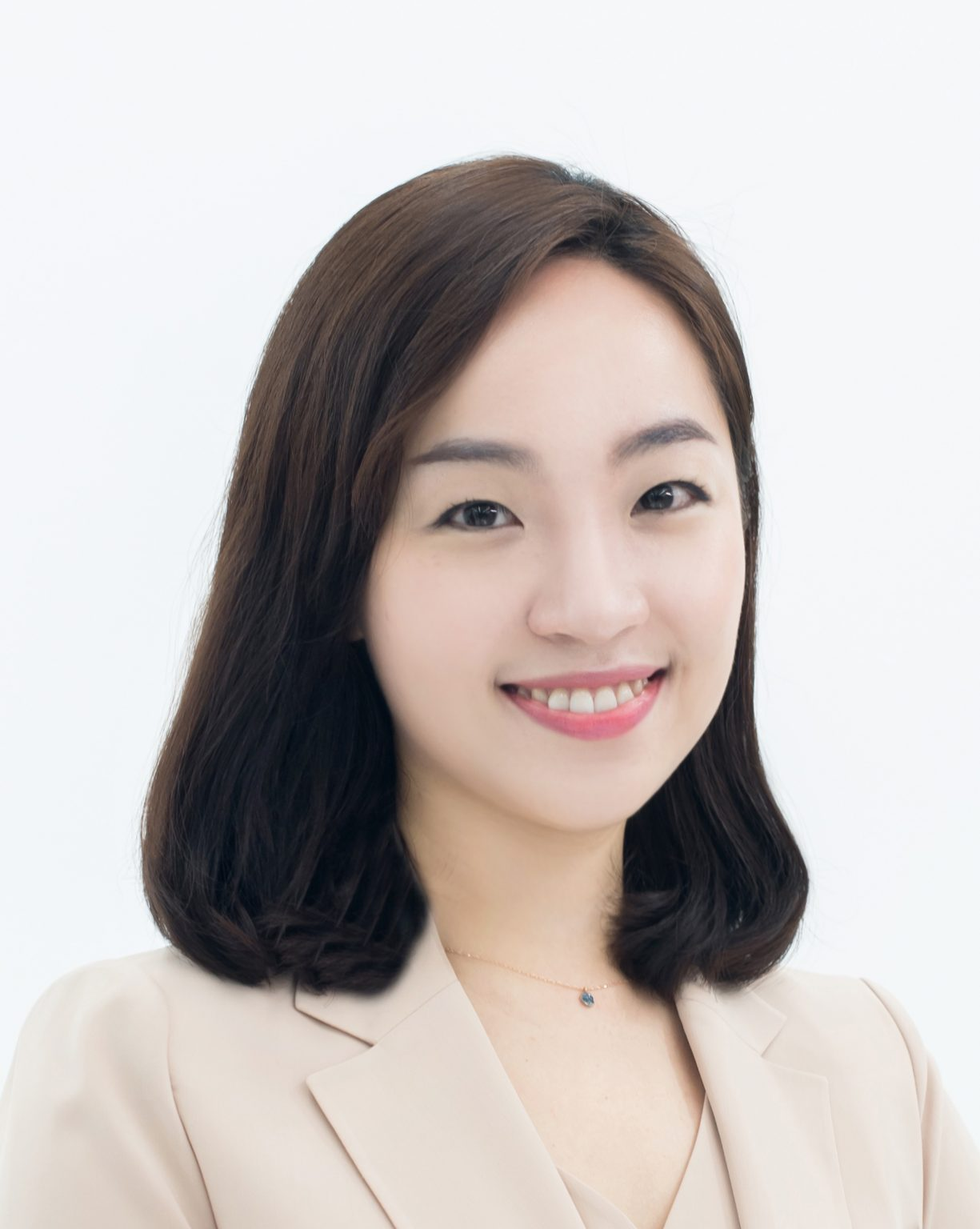}}]{Jeongyun Kim} (Member, IEEE) received the B.S., M.S., and Ph.D. degrees from the Department of Civil and Environmental Engineering, Korea Advanced Institute of Science and Technology (KAIST), in 2014, 2016, and 2021, respectively. She was a Post-Doctoral Researcher with Massachusetts Institute of Technology (MIT). She is currently an Assistant Professor with the Department of Mechanical and Automotive
Engineering, Seoul National University of Science and Technology. Her research is in the area of modeling urban mobility and developing solutions to the challenges related to the autonomous urban mobility system. Her current research interests include urban mobility analytics using large-scale movement data and modeling autonomy control system using machine learning.
\end{IEEEbiography}

\begin{IEEEbiography}[{\includegraphics[width=1in, height=1.25in, clip, keepaspectratio]{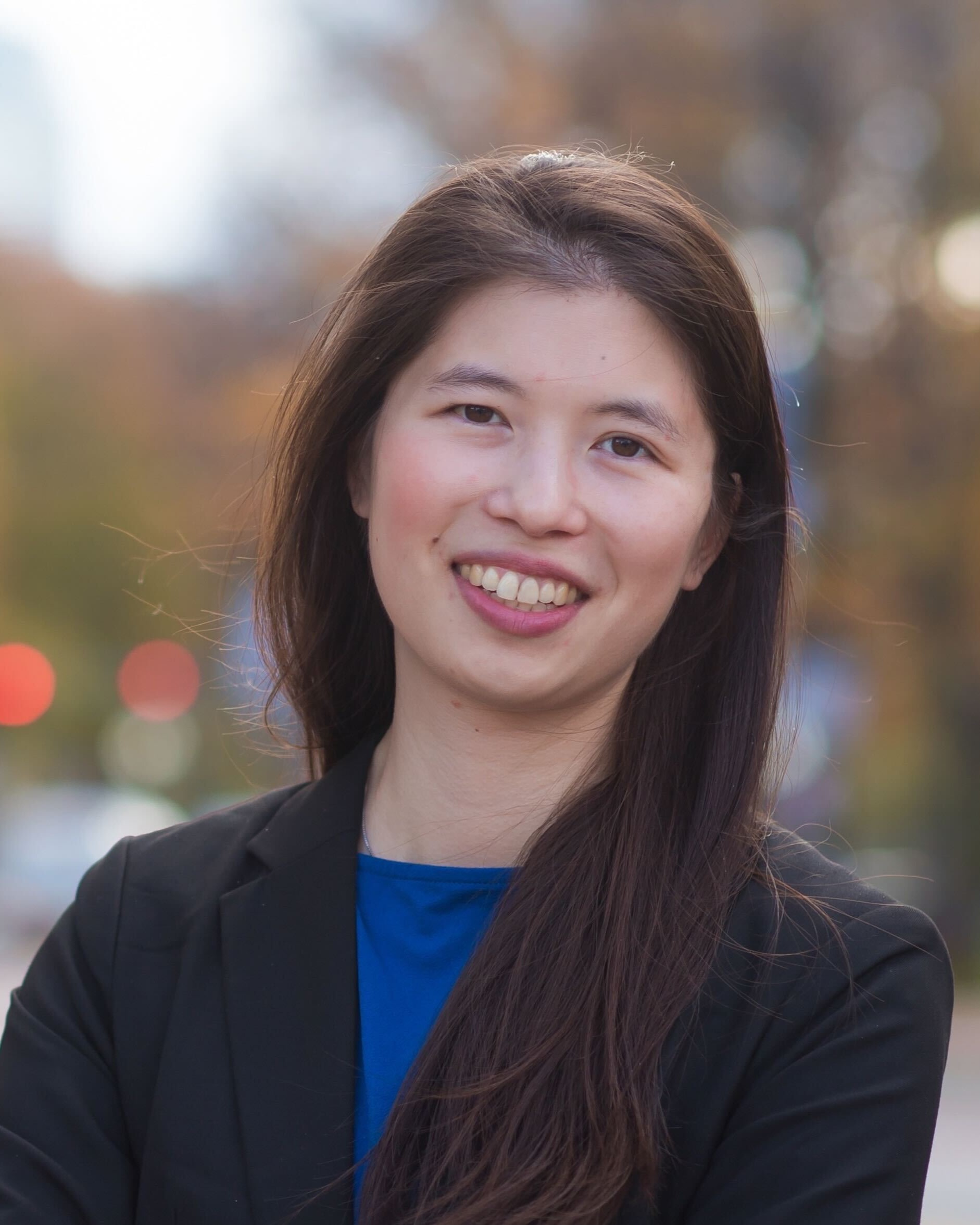}}]{Cathy Wu}
(Member, IEEE) 
received B.S. and M.Eng. degrees from MIT and a Ph.D. degree from UC Berkeley, all in EECS. She was a post-doctoral researcher at Microsoft Research. She is currently the Class of 1954 Career Development Associate Professor at MIT in LIDS, CEE, and IDSS. Her research centers around AI for Engineering, particularly to accelerate R\&D for the public interest and in civil infrastructure. In recent years, her focus has been on using machine learning to tackle bottlenecks in solving control and optimization problems in mobility. Cathy is the recipient of the NSF CAREER, PhD dissertation awards, and the Ole Madsen Mentoring Award. She serves on the Board of Governors for the IEEE ITSS, is a Program Co-chair for RLC 2025, and is an Associate Editor (or equivalent) for ICML, NeurIPS, ICRA, and TR Part C. She is also the inaugural Chair and Co-founder of the REproducible Research In Transportation Engineering (RERITE) Working Group.
\end{IEEEbiography}

\end{document}